\documentclass{article}
    \PassOptionsToPackage{numbers, compress}{natbib}
    \usepackage[preprint]{sty}
\usepackage[utf8]{inputenc} % allow utf-8 input
\usepackage[T1]{fontenc}    % use 8-bit T1 fonts
\usepackage{hyperref}       % hyperlinks
\usepackage{url}            % simple URL typesetting
\usepackage{booktabs}       % professional-quality tables
\usepackage{amsfonts}       % blackboard math symbols
\usepackage{nicefrac}       % compact symbols for 1/2, etc.
\usepackage{microtype}      % microtypography
\usepackage{xcolor}         % colors
\usepackage{wrapfig}
\usepackage{graphicx}
\usepackage{float}
\usepackage{amsmath}
\usepackage{amssymb}
\usepackage{mathtools}
\usepackage{amsthm}
\usepackage[algo2e,ruled,noend,vlined,linesnumbered]{algorithm2e}
\usepackage{algorithm}
\usepackage{algorithmic}
\usepackage[capitalize,noabbrev]{cleveref}
\usepackage{nccmath}
\usepackage{enumitem}
\usepackage{graphicx}
\usepackage{placeins}
\usepackage{xspace}
\usepackage{multirow}
\title{HyperTime: Hyperparameter Optimization for Combating Temporal
Distribution Shifts}
\author{Shaokun Zhang$^{1}$,~ Yiran Wu$^{1}$, Zhonghua Zheng$^{2}$, Qingyun Wu$^{1}$, Chi Wang$^{3}$ \\
$^{1}$ Pennsylvania State University, State College, PA, USA  \\
$^{2}$ The University of Manchester, Manchester, UK  \\
$^{3}$ Microsoft Research, Redmond, Washington, USA\\
\small{\texttt{\{shaokun.zhang, yiran.wu@psu.edu, qingyun.wu\}@psu.edu}}, \\
\small{\texttt{zhonghua.zheng@manchester.ac.uk}}, \\
\small{\texttt{wang.chi@microsoft.com}}
}

%================================ PREAMBLE ==================================
\allowdisplaybreaks

%---------- Spacing ------------
\setlength{\parindent}{0pt}
\setlength{\parskip}{9pt}

% --------- Begin Preamble -------------

\newcommand{\bx}{{\mathbf{x}}}

\newcommand{\bE}{{\mathbb E}}

\newcommand{\cC}{{\mathcal C}}
\newcommand{\cH}{{\mathcal H}}
\newcommand{\cD}{{\mathcal D}}

\newcommand{\train}{\text{train}\xspace}
\newcommand{\test}{{\text{test}}\xspace}
\newcommand{\val}{{\text{val}}\xspace}
\newcommand{\Loss}{{\texttt{Loss}}}
\newcommand{\worst}{\text{worst}\xspace}
\newcommand{\avg}{\text{avg}\xspace}
\newcommand{\LexiMin}{\text{LexiMin}\xspace}
\definecolor{purple}{rgb}{0.8,0.0,0.8}

\def \bu {\mathbf{u}}
\def \bbS {\mathbb{S}}

\DeclareMathOperator*{\argmin}{arg\,min}

%--------Environments----------
\newtheorem{thm}{Theorem}%[section]
\newtheorem{lem}[thm]{Lemma}

\theoremstyle{definition}

\theoremstyle{plain}
\newtheorem{theorem}{Theorem}[section]

\theoremstyle{definition}

\theoremstyle{remark}
\newtheorem{remark}[theorem]{Remark}

\begin{document}
\maketitle

\begin{abstract}
In this work, we propose a hyperparameter optimization method named \emph{HyperTime} to find hyperparameters robust to potential temporal distribution shifts in the unseen test data. 
Our work is motivated by an important observation that it is, in many cases, possible to achieve temporally robust predictive performance via hyperparameter optimization. Based on this observation, we leverage the `worst-case-oriented' philosophy
from the robust optimization literature to help find such
robust hyperparameter configurations. 
HyperTime imposes a lexicographic priority order on average validation loss and worst-case validation loss over chronological validation sets.
We perform a theoretical analysis on the upper bound of the expected test loss, which reveals the unique advantages of our approach. We also demonstrate the strong empirical performance of the proposed method on multiple machine learning tasks with temporal distribution shifts.  
\end{abstract}

\section{Introduction}

\begin{wrapfigure}{r}{0.48\textwidth}
\label{fig:demo}
\begin{minipage}{0.45\textwidth}
\vspace{-17pt}
\begin{figure}[H]
    \centering
    \includegraphics[width=0.78\columnwidth]{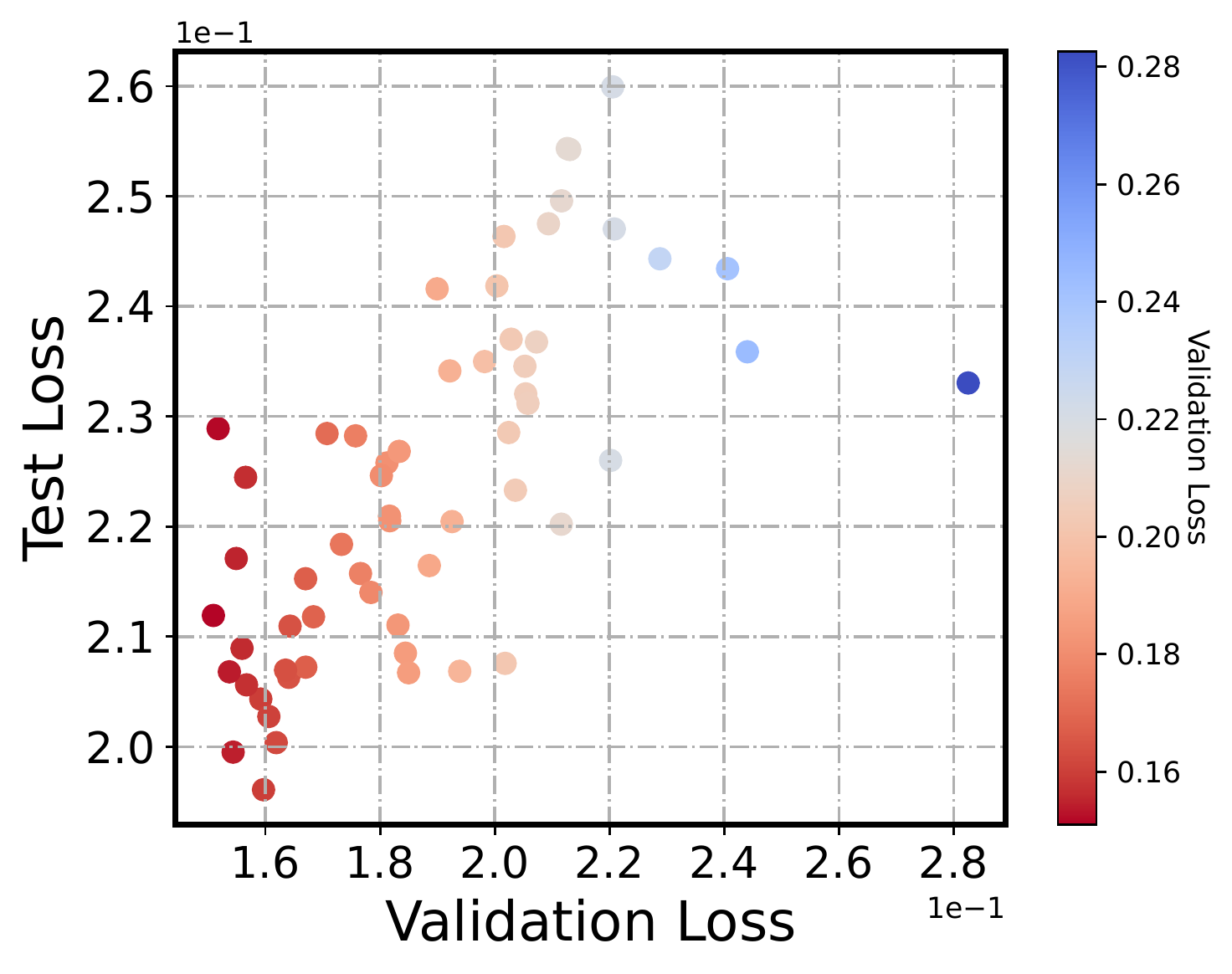}
    % \vspace{-5pt}
    \caption{Validation loss vs. test loss on the Electricity dataset, where the validation and test data are from different time periods. Each point is a hyperparameter configuration randomly sampled from the search space. The loss here is (1- ROC\_AUC). }
    \label{fig:intro}
    \vspace{-5pt}
\end{figure}
\end{minipage}
\end{wrapfigure}

One major hurdle for machine learning systems to effectively perform over time is \emph{temporal distribution shifts}, which occur when the data distribution changes over time. 
If ignored, temporal distribution shifts may considerably degrade the predictive performance of the deployed machine learning models because of the data distribution mismatch during test time and train time~\cite{yao2022wildtime}. 
In recent years, many methods have been proposed to improve ML model's robustness to distribution shifts in general, including continual learning~\cite{adel2019continual,chaudhry2018riemannian}, invariant learning ~\cite{arjovsky2019invariant,yao2022improving}, self-supervised learning~\cite{chen2020simple,caron2020unsupervised}, and ensemble learning~\cite{izmailov2018averaging}. 
Although the methods mentioned above could potentially be adapted to handle temporal distribution shifts, the problem remains open and challenging: according to the evaluations from the Wild-Time benchmark~\cite{yao2022wildtime}, no existing invariant learning, continual learning, self-supervised learning, or ensemble learning approach is consistently more robust to temporal distribution shifts than vanilla empirical risk minimization (ERM). 

In this work, instead of intervening in the ERM-based model training procedure, we approach the problem from a different perspective, hyperparameter optimization (HPO).
It is known that some hyperparameters can affect the generalization capability~\cite{sun2021stagewise, bai2021ood} of ML models. It is unknown, however, whether we can achieve temporally robust predictive performance via HPO. Figure~\ref{fig:demo} presents a case study on the Electricity dataset with temporal shifts. We observe that: (a) models trained based on different hyperparameter configurations may exhibit vastly different performances on chronologically out-of-sample test data, and (b) validation loss is positively correlated with test loss in general, but when the validation loss is close to the lowest, configurations with the same validation loss may still have significantly different test losses. 
The first observation indicates that it is possible to build ML models that are more robust to distribution shifts by performing hyperparameter tuning and model selection. The second observation suggests that it can be challenging to find such robust hyperparameter configurations.

In this work, we apply a principle from distributionally robust optimization~\cite{delage2010distributionally,duchi2018learning, sinha2017certifying, bertsimas2018data} to the regime of hyperparameter optimization. 
More specifically, when doing HPO in environments with temporal distribution shifts, instead of optimizing the average predictive performance on validation data, which are typically sampled uniformly at random, we propose to (1) construct validation sets from different time periods and treat them as different proxies for the unseen test data, and (2) consider both the average validation loss and worse-case validation loss during HPO. 
Specifically, we use a multi-objective HPO approach which allows a lexicographic structure~\cite{fishburn1975axioms} on the objectives to reflect the different priorities of the concerned objectives. We treat the commonly used average validation loss as the primary objective and the worst-case performance among the different subsets of the validation data as the secondary objective. This gives us the opportunity to leverage the worst-case performance toward finding robust configurations while respecting the importance of average validation loss.  We provide theoretical analysis on the expected test loss of our method. The analysis shows the unique advantage of leveraging the average and worst-case validation loss in a lexicographic manner.

We verify the effectiveness of our method for tuning gradient-boosting trees and neural networks on a diverse range of datasets with temporal distribution shifts. Our method is also compatible with robust learning/training methods and is able to further boost their robustness to temporal distribution shifts. 
\section{Related Work}
A number of works are proposed to improve machine learning model's robustness when distribution shifts happen.  
One paradigm that can be applied is continual learning~\cite{adel2019continual,chaudhry2018riemannian,kirkpatrick2017overcoming,schwarz2018progress,zenke2017continual,gupta2020look,lopez2017gradient,rebuffi2017icarl,shin2017continual} algorithms. The target of continual learning is to learn from new data on the fly while not forgetting previously learned information. Another paradigm that can be applied is invariant learning\cite{ganin2016domain,long2015learning,sun2016deep,tzeng2014deep,xu2020adversarial,yue2019domain}. Invariant learning methods aim to learn invariant representation across different domains, which could also be adapted to distribution shifts. The representative works include CORAL~\cite{sun2016deep}, IRM~\cite{arjovsky2019invariant}, LISA~\cite{yao2022improving}, and GroupDRO~\cite{sagawa2019distributionally}.
Third, self-supervised learning~\cite{chen2020simple,caron2020unsupervised,ji2021power,shen2022connect} and ensemble learning methods~\cite{izmailov2018averaging,dong2020survey,webb2004multistrategy,oza2001online} are also applicable to mitigating distribution shifts. 
All the aforementioned existing work concerns the training procedure to improve the resulting model's robustness. They are mostly model-specific and not consistently more robust to temporal distribution shifts than vanilla ERM according to the Wild-Time benchmark~\cite{yao2022wildtime}.

To the best of our knowledge, no existing HPO method concerns the temporal distribution shift problem. The only relevant work is a robust neural network search method named \emph{NAS-OoD}~\cite{bai2021ood}, which searches for neural networks that generalize to out-of-distribution data under the differentiable neural architecture search paradigm~\cite{liu2018darts}. However, this method is not model-agnostic and is not directly applicable to mitigate temporal distribution shifts.

\section{Method}
In this section, we present the proposed HPO method for combating temporal distribution shifts.

\subsection{Notions, Notations, and Background}
Before introducing details of the proposed method, we first introduce notions and notations to be used throughout the paper and some background knowledge on hyperparameter optimization and temporal distribution shifts.

\begin{itemize}[leftmargin=*]
    \setlength\itemsep{-0.2em}
    \item $(\bx,y)$ denotes a specific supervised data instance where $\bx$ represents the feature and $y$ represents the label. $\cD = \{..., (\bx, y), ...\}$ denotes a supervised dataset in general. When necessary, we use \(\cD_{t_1:t_2}\) to denote the subset of the dataset within a certain time period, e.g., \(t_1\) to \(t_2\).  
    \item $c$ denotes a hyperparameter configuration in a particular hyperparameter search space $\cC$.
    \item $f$ denotes a machine learning model in general. When further details on the training data and hyperparameters are needed, we use sub-script $c$ and $\cD$ in $f_{c, \cD}$ to reflect that the model is constructed with a hyperparameter configuration $c$ and trained on dataset $\cD$. We use $f(\bx)$ to denote the inference process on $\bx$ outputting a predicted label. 
    \item $\Loss(f, \cD)$ denotes the predictive loss of an ML model $f$ on dataset $\cD$ under a particular loss function. For example, when Mean Squared Error is the loss function $\Loss(f, \cD) = \frac{1}{|\cD|}\sum_{(\bx, y) \in \cD} (f(\bx)  - y)^2$. 
    \item We use \([K]\) as a shorthand for the set of integer from \(1\) to \(K\), i.e., \( [K] \coloneqq \{1,2,...,K\}\).
\end{itemize}

In a supervised machine learning setting, given a training dataset $\cD_{\train}$, the ultimate goal is to build a model $f$ based on $\cD_{\train}$ that has the best expected predictive performance on some unseen test data. Since the test data are unseen, a validation dataset is typically reserved (e.g., by sampling a particular portion uniformly at random) from the available training data as a proxy to evaluate the predictive performance of the model on the unseen test data. In ML practice, validation loss is used ubiquitously as the primary metric for model selection in HPO and, more generally, AutoML. Specifically, a typical formulation of HPO is the following black-box optimization problem,
\begin{align} \label{eq:vanilla_hpo}
  \min_{c \in \cC } \Loss(f_{c, \cD_{\train}}, \cD_{\val}),
\end{align}
in which $\Loss(f_{c, \cD_{\train}}, \cD_{\val})$ is the valuation loss on $\cD_{\val}$ corresponding to hyperparameter configuration $c$, and the objective of an HPO method under this formulation is to effectively find a hyperparameter configuration with the best validation loss. This optimization process is a principled approach for building an ML model with good expected predictive performance on unseen test data when there is no distribution drift in the data (the expected predictive performance on the test data and validation data are supposed to be close according to theories in statistical machine learning~\cite{learning_from_data_book}). However, when there is indeed data distribution drift, the optimization objective specified in Eq.~\eqref{eq:vanilla_hpo} becomes questionable because of the mismatch between the predictive performance on the validation and test data due to distribution shifts.

\subsection{Robust HPO by Imposing Lexicographic Objectives} 
Our overarching insight for doing robust HPO is to construct a set of possible realizations of the unseen test data and take the worst-case realizations into consideration in the hyperparameter optimization objectives. 

To implement this idea, we first construct $K$ validation sets, denoted by $\{ \cD_{1}, \cD_{2}, ..., \cD_{K}\}$, which are possible realizations of the unseen test data. Based on the $K$ validation sets, we could obtain a set of validation losses denoted by $\{L_1(c), L_2(c), ..., L_K(c)\} $ respectively.   We further denote the average loss and the worst loss among the $K$ losses as,
\begin{align}
L_{\avg}(c) \coloneqq \frac{\sum_{i=1}^K L_k(c)}{K} , L_{\worst}(c) \coloneqq \max \{ L_k(c) \}_{k \in [K]}.
\end{align}

If the data distribution in the unseen test set follows the same distribution as in the validation data, optimizing the average loss $L_{\avg}(c)$ is presumably a good practice, which is also the standard practice in classical HPO when cross-validation is used. However, in the scenarios where temporal shifts exist, this assumption is no longer true, and better practice is needed. Inspired by the ``worst-case-oriented" philosophy in robust optimization~\cite{delage2010distributionally, bertsimas2018data}, we propose to incorporate the validation loss on
the fold with the worst predictive performance, i.e., $L_{\worst}(c)$,
as an additional objective for HPO.

\textbf{Lexicographic Hyperparameter Optimization.}
It remains a question how one should incorporate the worst-case performance into consideration, especially regarding its relationship with average performance. 
In this work, we propose to include both average validation loss and worst-case validation loss during HPO and impose a lexicographic priority order on them. More specifically, we include the ordered list $\mathbf{L}(c) = [L_{\avg}(c), L_{\worst}(c)]$ as objectives with a lexicographic structure, in which $L_{\avg}(c)$ is the objective with higher priority and $L_{\worst}(c)$ as the one with lower priority.
By doing so, we could find a hyperparameter configuration with both a good average validation loss and a good worst-case validation loss over the validation folds.
Put more formally, we formulate the HPO process as:

\begin{align} \label{eq:lexi_hpo}
 \LexiMin_{c \in \cC } \mathbf{L}(c),
\end{align}
in which $\LexiMin$ is the optimization procedure over an ordered list of objectives $\mathbf{L}(c)$, following the Lexicographic relations defined in~\cite{zhang2023targeted}. 
% We denote optimization function by $L(c) = [L_{\avg}(c), L_{\worst}(c)]$. 
We use $L^{(i)}$ to denote the $i$-th element of the list $\mathbf{L}(c)$ in general. In our optimization function, $L^{(1)}$ and $L^{(2)}$ represents $L_{\avg}$ and $L_{\worst}$, respectively. Given any configurations $c$,  $c'$, and $I = \lvert \mathbf{L}(c) \rvert$ (with $I > 1$) optimization objectives with a lexicographic priority order, the definition of lexicographic relation (between any \(c' \in \cC \) and \(c \in \cC \)) is:
\begin{align} 
\label{eq:lexico_relation_better}
 \mathbf{L}(c')   =_{l}  \mathbf{L}(c)  ~ & \Leftrightarrow \forall i \in  [I]:  L^{(i)}(c') = L^{(i)}(c),\\ 
\label{eq:lexico_relation_eq}
  \mathbf{L}(c') \prec_{l}   \mathbf{L}(c) ~ & \Leftrightarrow   \exists i \in [I]:  \nonumber
   L^{(i)}(c') < L^{(i)}(c)  \land (\forall i' < i, L^{(i')}(c') = L^{(i')}(c) ),    \\   \nonumber
  \mathbf{L}(c') \preceq_{l}  \mathbf{L}(c)  ~ & \Leftrightarrow  \mathbf{L}(c') \prec_{l}   \mathbf{L}(c)   \lor  \mathbf{L}(c')  =_{l}  \mathbf{L}(c).
\end{align}
 
 The optimal point under $\LexiMin$ is called the \emph{lexi-optimal} point, which is any one element in hyperparameter configuration set \(\cC^* = \{ c \in \cC_*^{(I)} | \forall c' \neq c, \mathbf{L}(c) \preceq_{l} \mathbf{L}(c') \}\). Here $\cC_*^{I}$ is defined in the following recursive way: $ \cC_*^{(0)} = \cC$ and for $i \in [I]$, 
\begin{align} 
 &   \cC_*^{(i)} \coloneqq 
\{c \in \cC_*^{(i-1)} | L^{(i)}(\bx) \leq L_{*}^{(i)}*(1 + \kappa^{(i)}) \}, \\   
 &   L_{*}^{(i)} \coloneqq \inf_{c \in \cC_*^{i-1}} L^{(i)}(c), \nonumber
\end{align}

where $\kappa^{(i)}$ is a non-negative number, representing the percentage of performance compromise of the $i$-th objective to find choices with better performance on the low-priority objectives. 

Compared with directly using the average validation loss $L_{\avg}$ as the single optimization objective, $\LexiMin$ is able to incorporate an auxiliary objective $L_{\worst}$ by adding it as the secondary objective in lexicographic preference. 
In this way, the optimization of $L_{\worst}$ only matters when the more important objective $L_{\avg}$ is well-optimized, i.e., within its optimality tolerance range.  
Compared to classical multi-objective HPO approaches, $\LexiMin$ is able to incorporate the intuition that the average loss shall be prioritized. We modify the HPO solution designed for this type of $\LexiMin$ problem originally proposed in~\cite{zhang2023targeted} to solve our problem after constructing the objectives. We include the algorithm details in the Appendix~\ref{appendix_lexiflow}. 

\paragraph{Remarks on validation data sets construction.} In addition to the lexicographic objectives on the average validation loss and the worst-case validation loss, we believe it is also important to consider how the validation shall be constructed.

The guiding principle for constructing the validation sets is that the validation sets should represent possible realizations of unseen data. Considering this principle and the potential temporal distribution shifts in the dataset, we propose to retain the chronological order over the data instances and sample the $K$ folds of validation data $\cD_1, \cD_2, ..., \cD_K$ at different time periods. 
More specifically, we first split the chronologically ordered training dataset into $K$ segments with $K-1$ time points $t_1,...,t_{K-1}$ in addition to the starting point \(t_0\) and the end point \(t_{K}\) (the actual value of the time points can be application dependent). We then ensure $ \forall k \in [K]$ the validation set $\cD_{k}$ is sampled from time period between $t_{k-1}$ to $t_k$.  By doing so we have a collection of diverse validation sets representative of the potentially shifted data distributions in the available training set. 
\begin{wrapfigure}{r}{0.5\textwidth}
\begin{minipage}{0.5\textwidth}
\vspace{-17pt}
\begin{figure}[H]
    \centering
      \includegraphics[width=0.9\columnwidth]{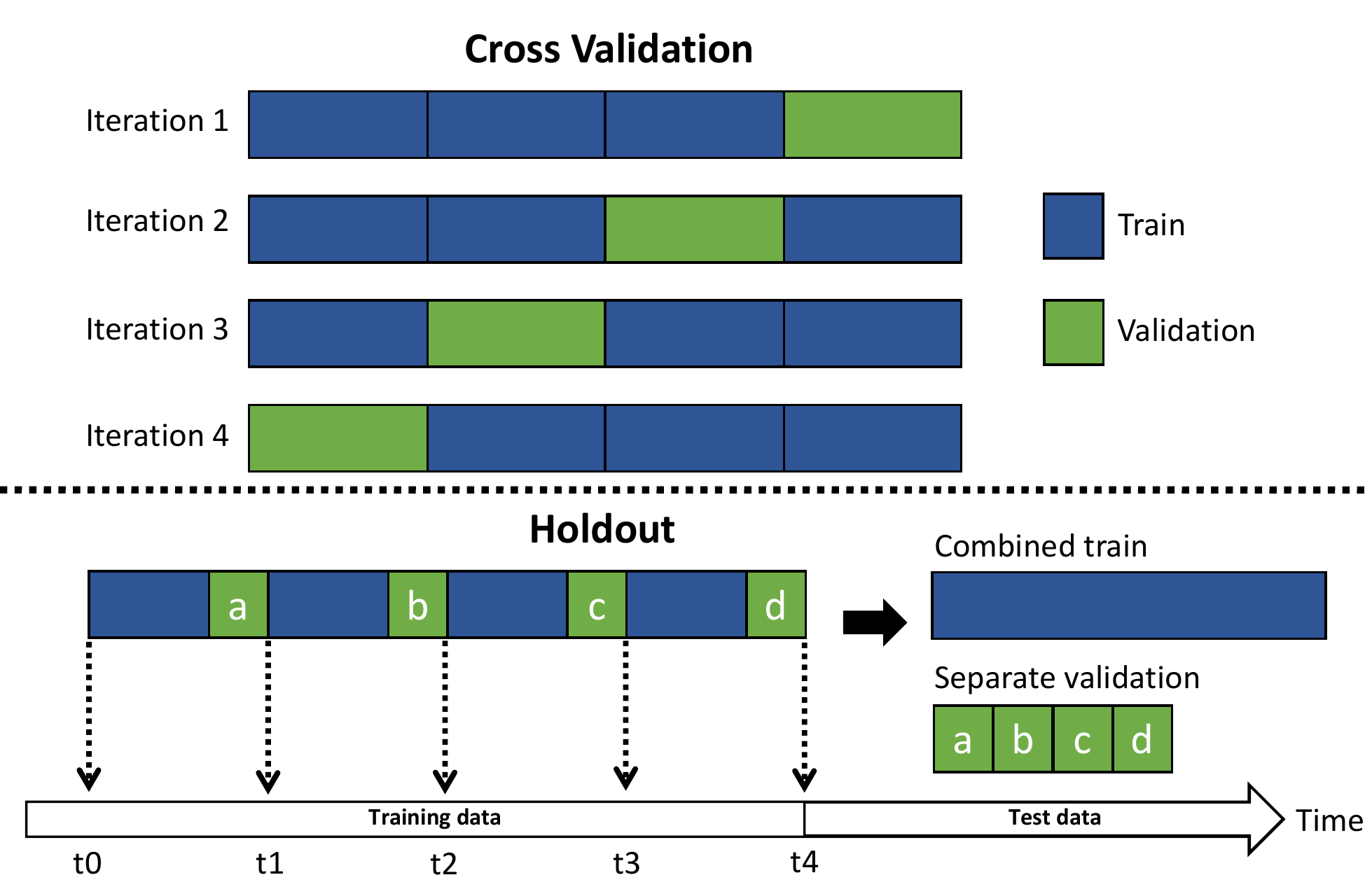}
    \vspace{-5pt}
    \caption{Chronological validation data sets construction with Cross Validation and Holdout strategies.}
    \label{fig:validation_sets_construction}
    \vspace{-5pt}
\end{figure}
\end{minipage}
\end{wrapfigure}
Depending on whether cross-validation or holdout is preferred, the validation set construction strategy and the corresponding calculation of validation losses in both typical cross-validation and holdout are visualized in Figure~\ref{fig:validation_sets_construction} and detailed formally as follows, in which we use \(\cD\) to denote the available dataset: \textbf{(1) Cross-validation:} Each evaluation of a particular configuration $c$ involves $K$ iterations of model training and evaluation. In the $i$-th iteration, the set $\cD_{i} = \cD_{t_{k-1}:t_k}$ is considered the validation set and the rest training set. And we have $L_i(c) \coloneqq \Loss(f_{c, (\cD \setminus \cD_{i})}, \cD_{i}) $ for $i \in [K]$. \textbf{(2) Holdout:} In this case, the evaluation of each configuration only involves training one single model with $K$ validation steps. The $k$-th validation set is $ \cD_{i} = \cD_{t_{k-1}':t_{k}} $ in which $t_{k-1} <t_{k-1}' < t_{k}$, and the data excluding the $K$-folds of valiation sets, i.e., $ \cD \setminus (\cD_1 + \cD_2 + ..., +  \cD_K) $, are used to train a model. And we have $L_i(c) \coloneqq  \Loss(f_{c, (\cD \setminus (\cD_{1} + \cD_{2} + ..., + \cD_{K}))}, \cD_{i}) $ for $i \in [K]$.

Although cross-validation is usually the preferred method because it allows models to train on multiple train-test splits, each evaluation of a particular configuration is more expensive than the holdout strategy (approximately $K$ times larger),
especially in hyperparameter tuning which depends on a large number of configuration evaluation processes. 
Therefore, we suggest choosing the validation sets construction method according to the detailed information of the scenario like data size, model types, resource limit, etc.

\section{Theoretical Analysis}
Following the same spirit as previous works on mitigating distribution shifts occurred with time in data stream~\cite{mallick2022matchmaker,hentschel2019online}, we assume that among the (K) validation sets from previous time periods, there exists one particular set that shares the most similar data distribution with the unseen test data at recent time periods. 
The optimal configuration is the one that performs best on this particular set. This assumption is an important relaxation of the full i.i.d. assumption required by existing HPO algorithms~\cite{bergstra2012random,duchi2018learning}.
We further introduce the following definitions to facilitate our analysis.

\begin{itemize}[leftmargin=*]
    \vspace{-4mm}
    \setlength\itemsep{-0.2em}
    \item Best configuration on the \(k\)-th validation data set \(c^*_{k}\): \(  c_{k}^* \coloneqq   \argmin\limits_{c \in \cC } \Loss(f_{c, \cD_{\train}}, \cD_{k}).\)
    \item Best average validation loss \(L^*_{\avg}\): 
    \(L^*_{\avg} \coloneqq \argmin\limits_{c \in \cC} L_{\avg}(c).\)
    \item We use \(k^*\) to denote the index of the validation set that shares the most similar data distribution with the unseen test data. In other words, validation set \( D_{{k}^*}\) shares the most similar data distribution with $D_{\test}$. 
    \item We use \(\hat c\) to denote the hyperparameter selected by our method.
\vspace{-4mm}
\end{itemize}

As defined above, the configuration \(c_{k^*}^*\) is the optimal configuration. However, \(k^*\) is unknown a prior without the test data. We provide the following bound on the validation loss of our selected configuration \(\hat c\) and a proof sketch as well as the detailed proof in Appendix~\ref{proof-theorem1} for Theorem~\ref{theorem_bound}.
 
\begin{thm} \label{theorem_bound}
When $\kappa \geq \frac{L_{\avg}(c^*_{k^*})}{L^*_{\avg}} - 1$, with probability at least \(1-\epsilon\) \((\epsilon \in (0,1))\), we have the following bounds on the expected test loss of the model with our selected configuration \(\hat c\),

\begin{align}
   \mathbb{E}[\Loss(f_{\hat c}, \cD_{\test})] \leq 
   \begin{cases}
      (1+\kappa)  L_{avg}(c_{k^*}^*) + \sqrt{\frac{\beta \ln (2/\epsilon) }{2|\cD_{\val}|}},   
      & \text{if~~}   L_{k^*}(\hat c) \leq L_{avg}(\hat c)    \\
      L_{\worst}(c_{k^*}^*) + \sqrt{\frac{\beta \ln (2/\epsilon) }{2|\cD_{\val}|}}, & \text{otherwise}   \nonumber
   \end{cases}
\end{align}

in which \(\beta\) is the upper bound on the loss. E.g., in binary classification task with 1-accuracy as the loss metric, \(\beta = 1\). 
\end{thm}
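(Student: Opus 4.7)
The plan is to bound the expected test loss by first passing through the validation loss on the distribution-matched set $\cD_{k^*}$, and then using the lexicographic optimality properties of $\hat c$ to control that quantity. Concretely, since $\cD_{k^*}$ and $\cD_{\test}$ are assumed to share the closest data distribution, I would start by applying Hoeffding's inequality to the bounded loss (range at most $\beta$) to get, with probability at least $1-\epsilon$,
\begin{align*}
\mathbb{E}[\Loss(f_{\hat c}, \cD_{\test})] \;\leq\; L_{k^*}(\hat c) + \sqrt{\frac{\beta \ln(2/\epsilon)}{2|\cD_{\val}|}}.
\end{align*}
This reduces the problem to bounding the empirical quantity $L_{k^*}(\hat c)$ in two different ways, according to whether the loss on the matched fold is below or above the average.

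Next I would exploit the assumption $\kappa \geq \frac{L_{\avg}(c^*_{k^*})}{L^*_{\avg}} - 1$, which is equivalent to $L_{\avg}(c^*_{k^*}) \leq (1+\kappa) L^*_{\avg}$. This inequality is crucial because it certifies that $c^*_{k^*}$ lies in the feasible set $\cC^{(1)}_*$ after the first lexicographic stage that optimizes $L_{\avg}$ with tolerance $\kappa$. Since $\hat c$ is the lexi-optimal point, it minimizes $L_{\worst}$ over $\cC^{(1)}_*$, hence
\begin{align*}
L_{\worst}(\hat c) \;\leq\; L_{\worst}(c^*_{k^*}),
\qquad
L_{\avg}(\hat c) \;\leq\; (1+\kappa) L^*_{\avg} \;\leq\; (1+\kappa) L_{\avg}(c^*_{k^*}).
\end{align*}

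Finally I would split on the conditional in the theorem. In the first case, $L_{k^*}(\hat c) \leq L_{\avg}(\hat c)$, so chaining with the average-loss bound yields $L_{k^*}(\hat c) \leq (1+\kappa) L_{\avg}(c^*_{k^*})$, and plugging back into the Hoeffding inequality gives the first branch. In the complementary case, $L_{k^*}(\hat c) > L_{\avg}(\hat c)$, which means that $L_{k^*}(\hat c)$ is strictly above the mean of $\{L_k(\hat c)\}_{k \in [K]}$, so in particular $L_{k^*}(\hat c) \leq L_{\worst}(\hat c)$ (the max dominates any entry that exceeds the mean). Combined with $L_{\worst}(\hat c) \leq L_{\worst}(c^*_{k^*})$ from the lexicographic argument and then the Hoeffding bound, this yields the second branch.

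The routine parts are Hoeffding and the case split; the substantive step is the feasibility argument that converts the $\kappa$-assumption into $c^*_{k^*} \in \cC^{(1)}_*$, since without it we would have no way to compare $L_{\worst}(\hat c)$ against $L_{\worst}(c^*_{k^*})$. The only genuine subtlety I would flag as the main obstacle is making precise what "shares the most similar distribution" means in order to justify applying Hoeffding with $\cD_{k^*}$ as a proxy for $\cD_{\test}$; modeling this as the exchangeability/i.i.d.\ assumption between samples drawn from $\cD_{\test}$ and $\cD_{k^*}$ makes the concentration step clean, and otherwise the theorem should be read as an approximation whose tightness depends on how well $\cD_{k^*}$ matches $\cD_{\test}$.
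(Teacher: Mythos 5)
Your proposal is correct and follows essentially the same route as the paper's own proof: a Hoeffding concentration step reducing the expected test loss to $L_{k^*}(\hat c)$, the observation that the $\kappa$-condition places $c^*_{k^*}$ in the first-stage feasible set so that $L_{\worst}(\hat c) \leq L_{\worst}(c^*_{k^*})$, and the same case split on $L_{k^*}(\hat c) \lessgtr L_{\avg}(\hat c)$ with the chains $L_{k^*}(\hat c) \leq L_{\avg}(\hat c) \leq (1+\kappa)L^*_{\avg} \leq (1+\kappa)L_{\avg}(c^*_{k^*})$ and $L_{k^*}(\hat c) \leq L_{\worst}(\hat c) \leq L_{\worst}(c^*_{k^*})$. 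Your closing caveat about what justifies treating $\cD_{k^*}$ as an i.i.d.\ proxy for $\cD_{\test}$ is a fair and accurate reading of the assumption the paper itself relies on.
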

\begin{remark}[The role of \(\kappa\)]
According to the analysis in Appendix~\ref{proof-theorem1}, we have: \textbf{(1)} When \(L_{k^*}(\hat c) \leq L_{avg}(\hat c)\), a smaller \(\kappa\) shall be preferred. In fact, under this case, if we set \(\kappa\) to 0, and the method recovers the naive alternative, which uses the average validation loss as the HPO objective. \textbf{(2)} When \(L_{k^*}(\hat c) > L_{avg}(\hat c)\), using the average validation loss is no longer a good strategy as it may make the expected test loss \(\mathbb{E}[\Loss(f_{\hat c}, \cD_{\test})]\) as large as \(K L_{\worst}(c_{k^*}^*) + \sqrt{\frac{\beta \ln (2/\epsilon) }{2|\cD_{\val}|}}\). With our method, as long as \(\kappa\) satisfies \(\kappa \geq \frac{L_{\avg}(c^*_{k^*})}{L^*_{\avg}} - 1\),  \(\mathbb{E}[\Loss(f_{\hat c}, \cD_{\test})]\) is upper bounded by  \(L_{\worst}(c_{k^*}^*) + \sqrt{\frac{\beta \ln (2/\epsilon) }{2|\cD_{\val}|}}\) with high probability. 
Considering the fact that \(k^*\) is unknown a prior (in other words, which fold of the validation data is most similar with the test data is unknown a prior), both case (I) and case (II) may happen. Our method is able to properly bound the expected test loss in both cases despite the value of \(k^*\) is unknown. 
\end{remark}

\section{Experiments}
We first evaluate our method (\emph{HyperTime}) on the gradient-boosting trees and neural networks tuning tasks in Section~\ref{subsec:effectivenss} to verify the effectiveness of our method. 
We further perform in-depth investigations in Section~\ref{subsec:investigation} to (1) provide a better understanding of the important contributing factors in our method; and (2) study the compatibility of our method with robust training methods. If not otherwise specified, all the results in our evaluation are averaged over five different random seeds. 

\subsection{Effectiveness}
\label{subsec:effectivenss}
In this subsection, we show the off-the-shelf effectiveness of our proposed method for tuning tree-based boosting methods and deep neural networks.  We include three single objective HPO methods as baselines in all the evaluations, including randomized direct search method~\cite{wu2021frugal} (CFO), bayesian optimization HPO algorithm~\cite{bergstra2011algorithms} (BO), and multiple multi-fidelity HPO algorithm~\cite{li2017hyperband} (HB), which search for the best configuration that maximizes the average validation losses. 
In the task of boosting trees tuning, we also include the learners with default configuration, as baselines. This baseline can be considered as an ERM method under the tree-based boosting framework. 
In the task of deep neural network tuning, we include state-of-the-art robust training methods (including a vanilla ERM as well) for comparison. The detailed search spaces for each learner are included in Appendix~\ref{append:evaluation}. 

We use three metrics to perform evaluations on the test set, which could reveal the test performance of a method from multiple aspects. \textbf{(1) Average performance}: Average performance of all test folds. It reflects the overall performance of a specific method, and it is typically considered the most important metric in practice. \textbf{(2) Worst fold performance}: Worst fold performance across all test folds. It reflects the performance of a specific method in the worst cases. \textbf{(3) Winning fold number}: Number of test folds achieving the best performance compared with other methods. When temporal distribution shift happens, assuming each test fold follows one specific data distribution, winning fold number could reflect the number of cases in which a specific method works best compared with other methods.

\subsubsection{Tuning tree-based boosting methods}
\label{sec:tabular_experiments}

We first perform the evaluation for tuning different gradient-boosting trees on three tabular datasets, including two large-scale datasets Vessel Power Estimation~\cite{malinin2023shifts} and Urban Temperature Prediction, and a relatively small dataset Electricity~\cite{mallick2022matchmaker} to cover a wide use cases. The detailed information and the reasons for choosing these three datasets are shown in Appendix~\ref{appendix:dataset}. We tune XGBoost on the Electricity and Vessel Power Estimation datasets, and LightGBM on the Urban Temperature Prediction dataset~\cite{KAYETAL15}. Note that in CFO, we use the conventionally used validation data set construction, i.e., constructing validation sets by randomly sampling from shuffled datasets.
We report the average test loss, worst fold test loss with corresponding standard deviation, and the winning fold number in Table~\ref{tab:tabular}. 
Compared with all baselines, HyperTime achieves the best performance in terms of both average performance and the worst fold performance on all three datasets. It indicates our method could indeed help find hyperparameter configurations with relatively robust performance during test time.

\begin{table}[H]
\caption{Test time performance of HyperTime and baselines for tuning gradient-boosting trees on different datasets. We show the average test loss (Test-average), and average worst fold test loss (Test-worst) across test folds with 5 seeds respectively. The losses are the lower the better. For each method, we also show the number of folds achieving the best results compared with other methods, i.e., winning fold num (WN), which is the higher the better.} 
\label{tab:tabular}
\resizebox{\linewidth}{!}{
\begin{tabular}{cccccccccc}
\toprule[1.5pt]
\multicolumn{1}{c|}{}          & \multicolumn{3}{c|}{\textbf{Vessel Power}}                            & \multicolumn{3}{c|}{\textbf{Temperature}}                             & \multicolumn{3}{c}{\textbf{Electricity}}                           \\ \hline
\multicolumn{1}{c|}{Metric}    & Test-average         & Test-worst           & \multicolumn{1}{c|}{WN} & Test-average         & Test-worst           & \multicolumn{1}{c|}{WN} & Test-average         & Test-worst           & WN                   \\ \hline
\multicolumn{1}{c|}{Default}   & 1239.71              & 1936.27              & \multicolumn{1}{c|}{2}  & 1.1531               & 1.2831               & \multicolumn{1}{c|}{0}  & 0.1699               & 0.2186               & 2                    \\
\multicolumn{1}{c|}{CFO}       & 1475.69 (436.84)     & 2403.61 (538.37)     & \multicolumn{1}{c|}{0}  & 1.093 (0.049)        & 1.205 (0.058)        & \multicolumn{1}{c|}{0}  & 0.1781 (0.049)       & 0.2274 (0.048)       & 0                    \\
\multicolumn{1}{c|}{BO}        & 1966.80 (551.51)     & 2929.13 (594.74)     & \multicolumn{1}{c|}{0}  & 1.071 (0.038)        & 1.160 (0.038)        & \multicolumn{1}{c|}{0}  & 0.1744 (0.045)       & 0.2165 (0.044)       & 0                    \\
\multicolumn{1}{c|}{HB}        & 1757.17 (578.28)     & 2923.18 (758.25)     & \multicolumn{1}{c|}{0}  & 1.091 (0.048)        & 1.194 (0.044)        & \multicolumn{1}{c|}{0}  & 0.1782 (0.048)       & 0.2321 (0.043)       & 0                    \\
\multicolumn{1}{c|}{HyperTime} & \textbf{1108.97 (152.51)}     & \textbf{1397.14 (166.38)}     & \multicolumn{1}{c|}{\textbf{5}}  & \textbf{1.064 (0.037)}        & \textbf{1.149 (0.035)}        & \multicolumn{1}{c|}{\textbf{7}}  & \textbf{0.1653 (0.042)}       & \textbf{0.2112 (0.043)}       & \textbf{4}                    \\ \bottomrule[1.5pt]
\multicolumn{1}{l}{}           & \multicolumn{1}{l}{} & \multicolumn{1}{l}{} & \multicolumn{1}{l}{}    & \multicolumn{1}{l}{} & \multicolumn{1}{l}{} & \multicolumn{1}{l}{}    & \multicolumn{1}{l}{} & \multicolumn{1}{l}{} & \multicolumn{1}{l}{}  \\ 
\multicolumn{1}{l}{}           & \multicolumn{1}{l}{} & \multicolumn{1}{l}{} & \multicolumn{1}{l}{}    & \multicolumn{1}{l}{} & \multicolumn{1}{l}{} & \multicolumn{1}{l}{}    & \multicolumn{1}{l}{} & \multicolumn{1}{l}{} & \multicolumn{1}{l}{}
\end{tabular}}
\vskip -0.3in
\end{table}

We also present the predictive performance on each fold of the test data in Figure~\ref{fig:mainexp_tabular_folds}. Figure~\ref{fig:mainexp_tabular_folds} shows that HyperTime is consistently better than the baseline methods on different test folds in most cases. 
Although there are cases where the baseline methods have better performance than HyperTime on a specific fold, the margin of the differences is small. 
\begin{figure*}[htb!]
\vskip -0.1in
\begin{center}
\includegraphics[width=0.29\textwidth]{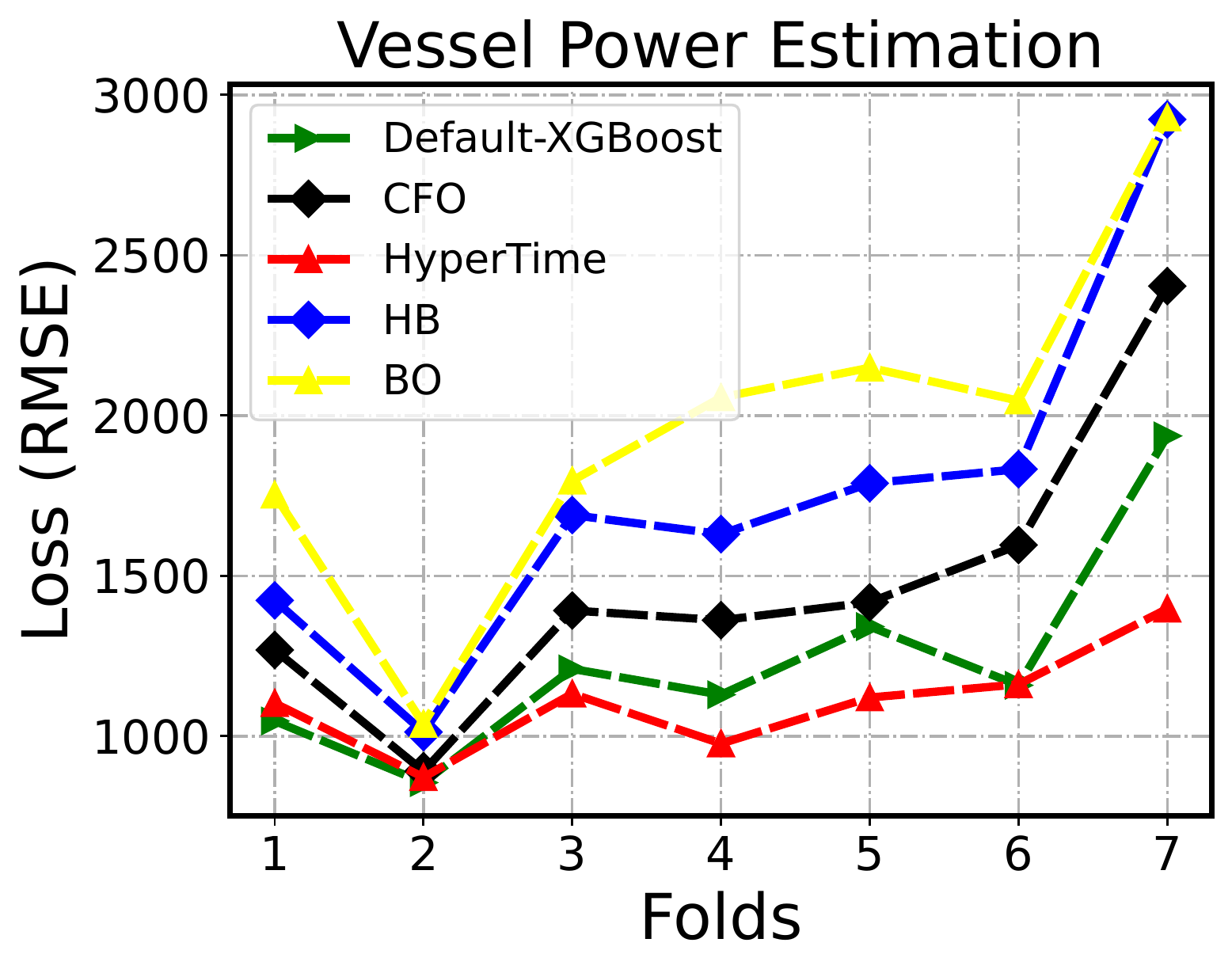}
\includegraphics[width=0.29\textwidth]{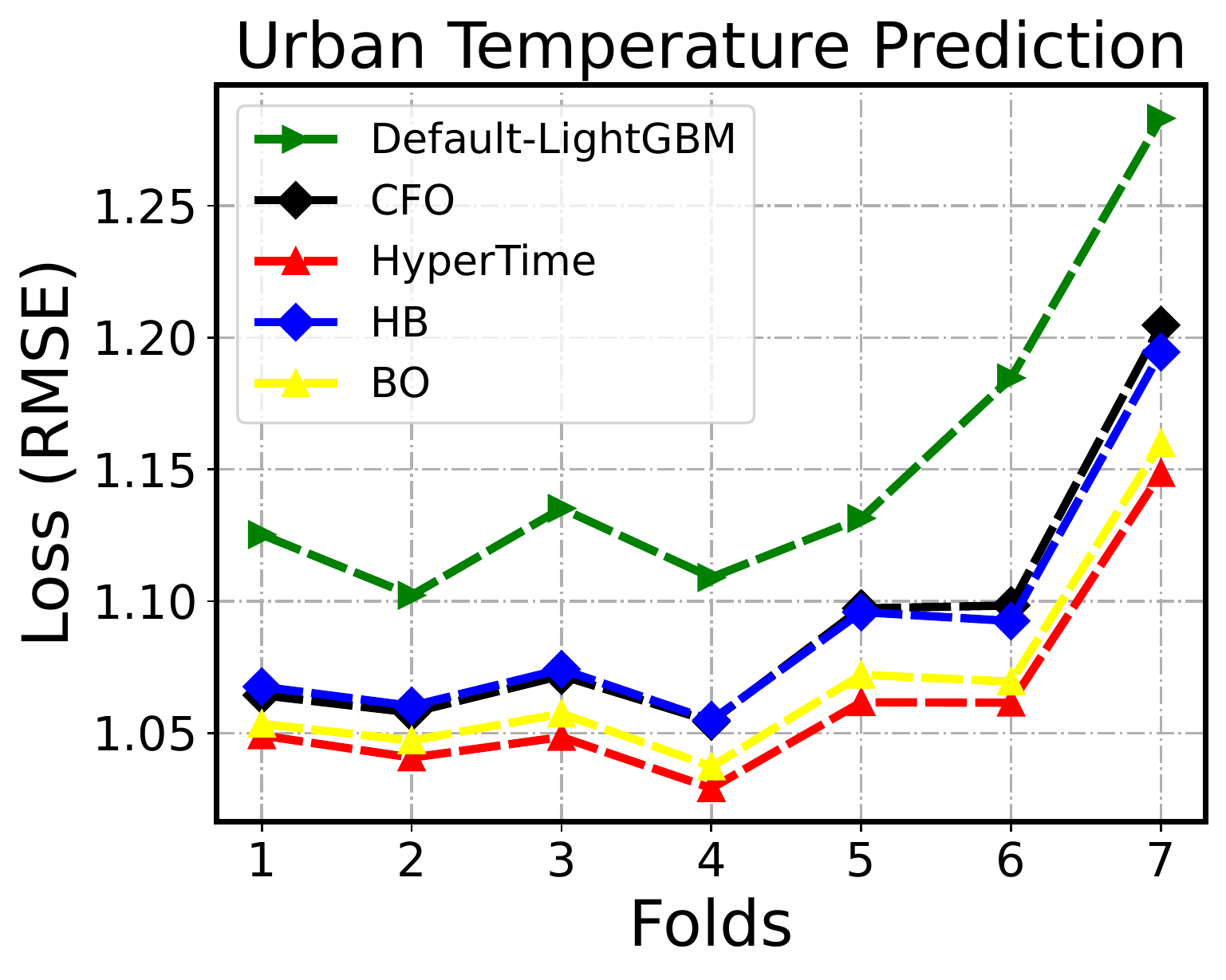}
\includegraphics[width=0.29\textwidth]{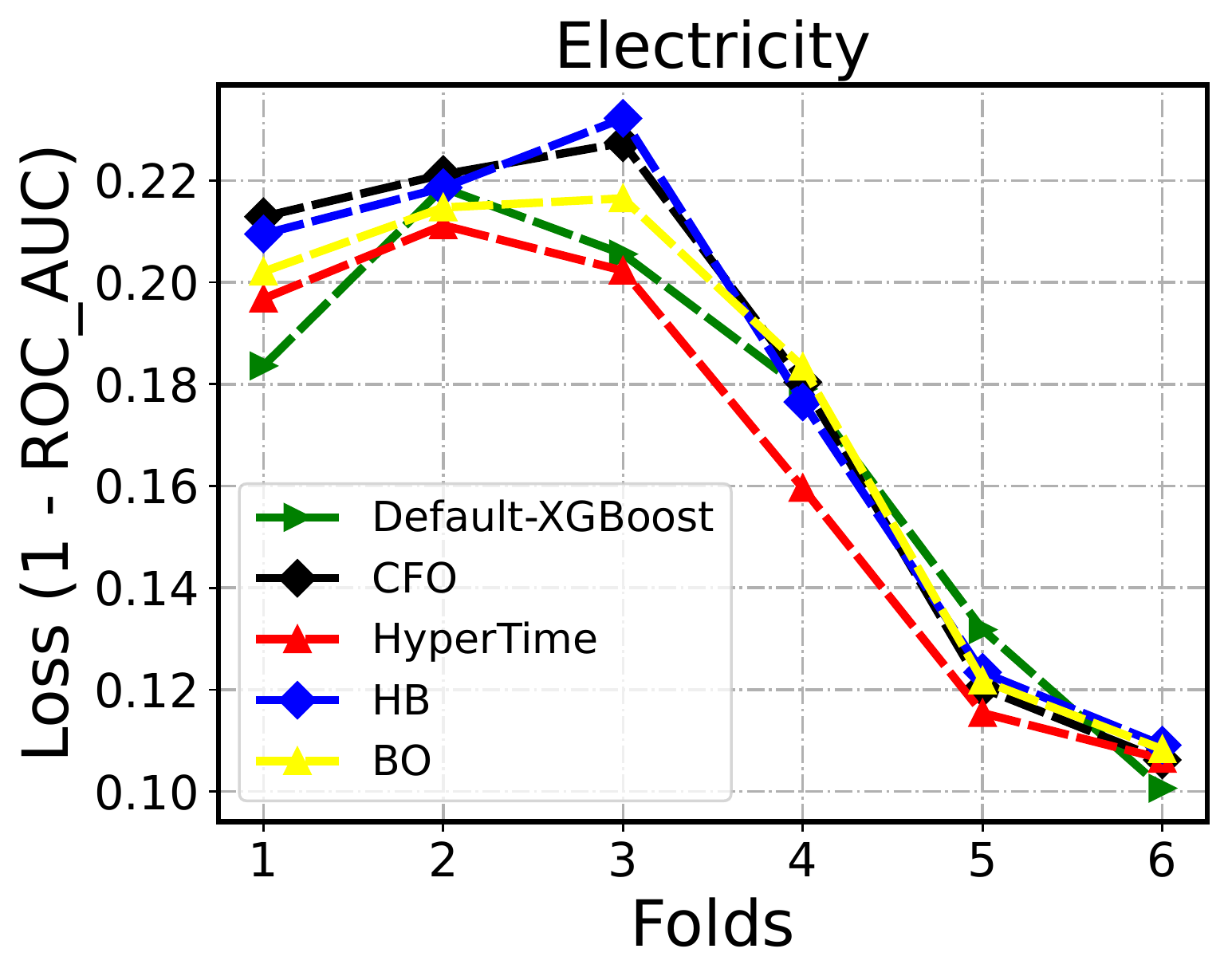}
\caption{Per fold test loss (lower the better) for tuning gradient-boosting trees on different datasets. The results are averaged over different random seeds. The results are from the same set of experiments with that in Table~\ref{tab:tabular}. \label{fig:mainexp_tabular_folds}
}
\end{center}
\vskip -0.15in
\end{figure*}

We also have an interesting observation: Vanilla HPO with the average validation loss as the objective is worse than the default learner in two of three datasets (2/3). This scenario also appears in pioneer works~\cite{bentejac2021comparative} and it reflects the motivation of our paper to some extent. Single-objective HPO algorithms only use the validation loss as the optimization objective, which may cause the searched architectures to overfit the validation data. This overfitting scenario in HPO has also been justified in~\cite{zhang2023targeted}.

\subsubsection{Tuning neural networks}
\label{sec:image}
We perform a neural network tuning task on a large image classification dataset Yearbook from the Wild-Time benchmark~\cite{yao2022wildtime}, which  consists of 33,431 American high school yearbook photos. Due to the change of social norms, and other potential factors that may change with the passage of time, there exist temporal distribution shifts in it~\cite{ginosar2015century}.   

To make our evaluation more comprehensive and convincing, in addition to the single-objective HPO baselines, we also include the state-of-the-art robust training methods that are applicable to this task. For each type of method mentioned in Wild-Time, we choose one algorithm with the best average test performance according to the benchmarked results. Specifically, we include the classic supervised learning method ERM, a continual learning method Fine-tuning, temporal invariant learning method LISA~\cite{yao2022improving}, a contrastive learning method SimCLR~\cite{chen2020simple} and a Bayesian learning method SWA~ \cite{izmailov2018averaging}. 
We use the implementations for those methods from Wild-Time and follow the same Eval-Fix evaluation setting with the benchmark. In addition to YearBook, we also conducted experiments on other datasets included in the Wild-Time benchmark~\cite{yao2022wildtime} as shown in Appendix~\ref{app:more_experiments}.

Table~\ref{tab:image} shows the final test results from HyperTime and all the compared methods. In terms of average performance and the worst fold performance, we observe that HyperTime is the best one compared to others. Moreover, we also observe that the performance of the HPO algorithms (single HPO algorithms and HyperTime) are significantly better than the non-HPO methods. 
We also show the winning number for each method in Table~\ref{tab:image}, HyperTime gets the best results on 7/9 of the test folds which is significantly better than other methods. 

\begin{table*}[htb!]
\vskip -0.1in
\centering
\caption{The results of baselines and our method on the yearbook dataset. We show the \textbf{average test accuracy}, the \textbf{worst fold accuracy}, and the number of winning folds (\textbf{WN}) across 9 test folds with 3 seeds, which are denoted as Test-average, Test-worst, and Winning fold num, respectively. All the numbers are the higher the better.}
\label{tab:image}
\scalebox{0.9}{
\begin{tabular}{cccccccccc}
\toprule[1.5pt]
             & ERM & Fine-tuning & LISA & SIM-CLR & SWA & CFO & BO & HB & HyperTime \\ \hline
Test-average     & 77.74        & 79.09                & 83.45         & 74.72            & 82.60        & 83.88        &83.55 &83.83 & \textbf{84.58}      \\
Test-worst   & 65.24        & 70.09                & 70.74         & 62.69            & 71.57        & 73.05       &71.23 &70.43 & \textbf{73.91}      \\
WN & 0            & 0                    & 2             & 0                & 0            & 0           &0 &0 & \textbf{7}          \\ \bottomrule[1.5pt]
\end{tabular}}
\vskip -0.1in
\end{table*}

In summary, the effectiveness of HyperTime is evidenced by its superior performance compared to single objective HPO algorithms such as CFO, BO, and HB, as well as other state-of-the-art non-HPO methods across various tasks. Furthermore, HyperTime consistently outperforms ERM on all datasets of Wild-Time, further supporting its superiority.

\subsection{Further Investigation}
\label{subsec:investigation}
In this subsection, we conduct further investigations for our method including ablation studies and an evaluation of our method when combined with robust training methods.

\subsubsection{Ablation}

We first do a series of ablation studies aiming to provide a better understanding regarding the two important components of our method: (1) Regarding the validation sets: Does the chronological re-sampling strategy matter when constructing the validation sets in our method?  
(2) Regarding the optimization objectives:
Are there easy alternatives to achieve similarly good performance?

\begin{figure}[H]
\centering
\begin{minipage}[t]{0.49\textwidth}
\centering
\includegraphics[width=3.25cm,height=2.75cm]{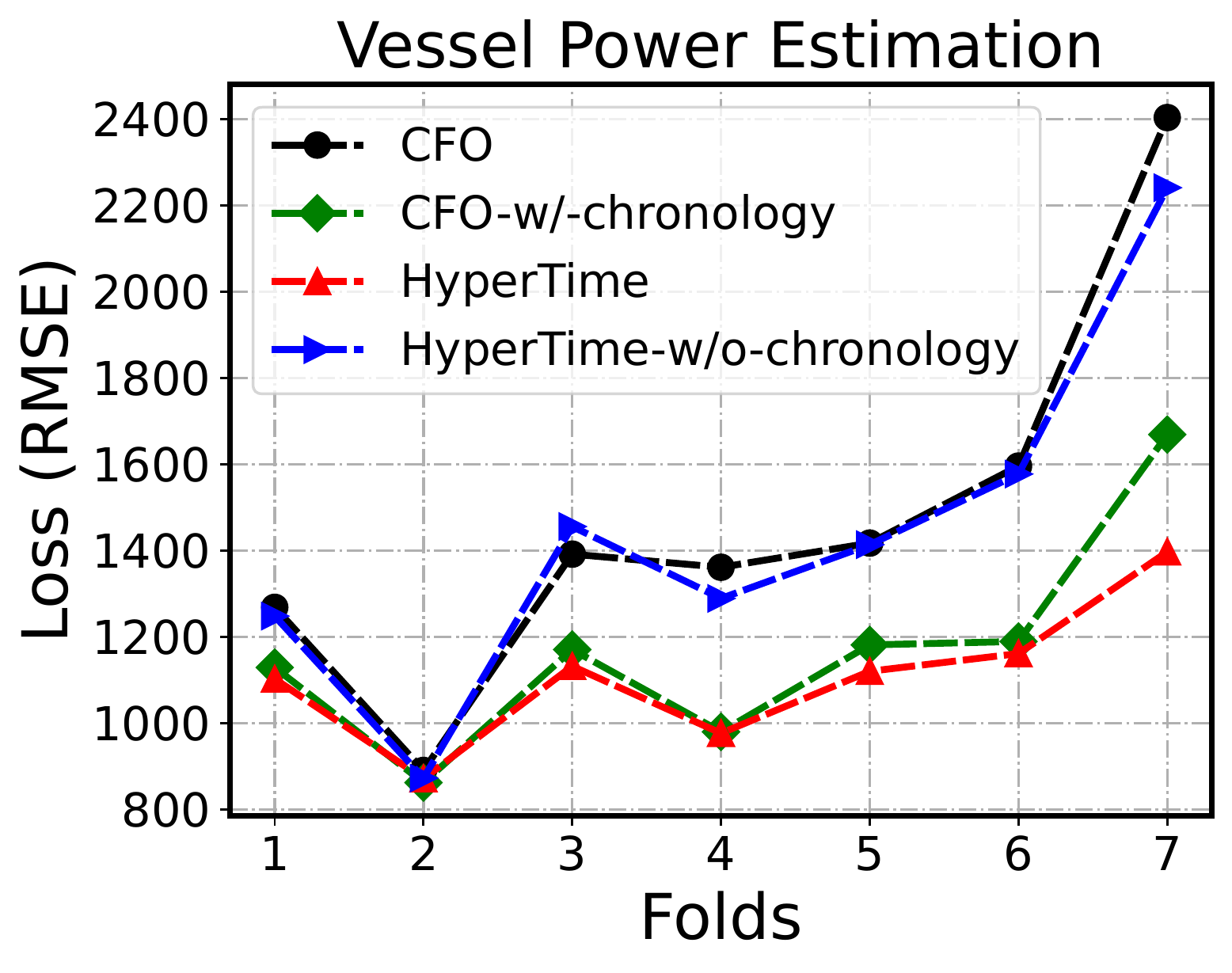}
\includegraphics[width=3.25cm,height=2.75cm]{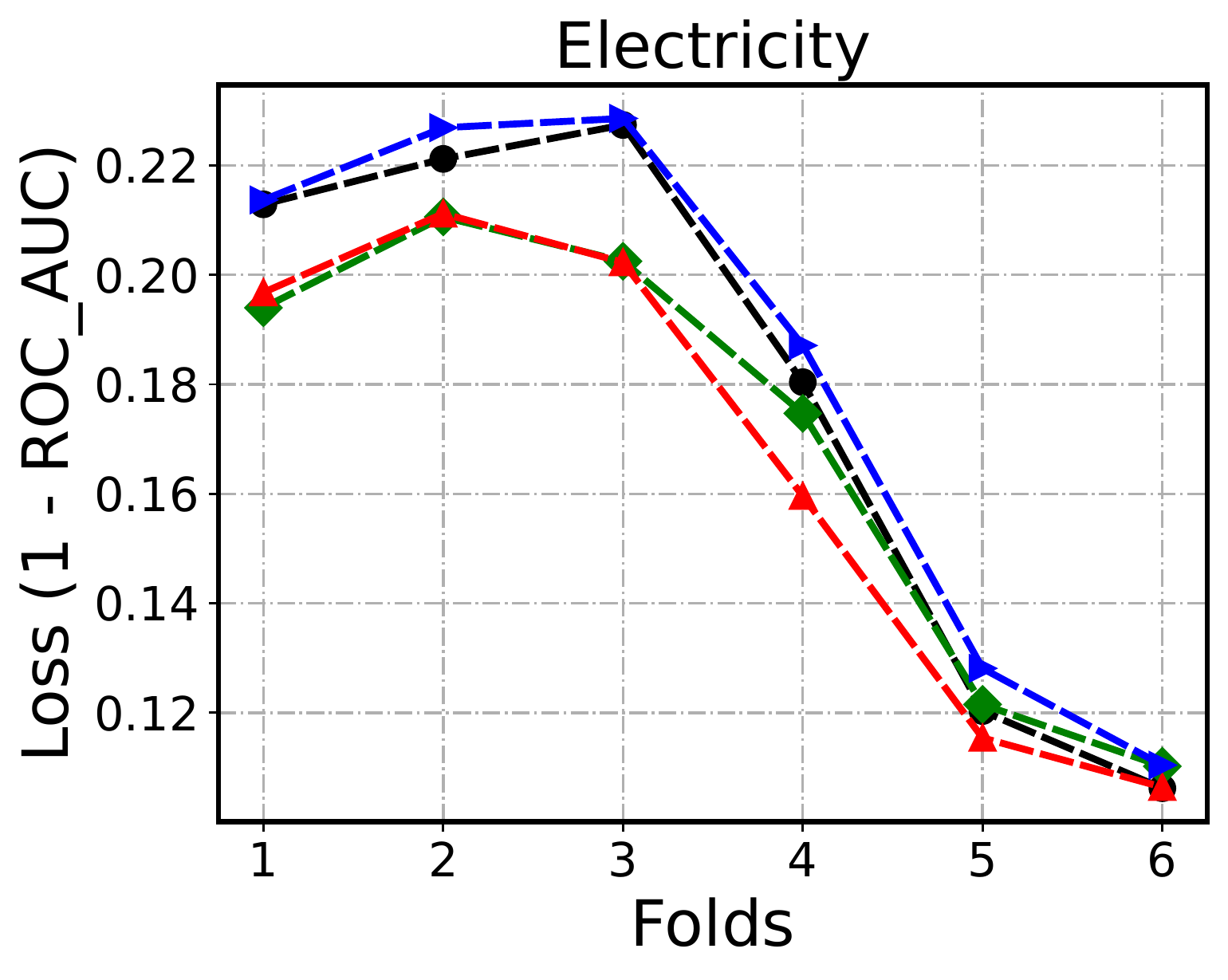}
\caption{Test loss of CFO and HyperTime on different folds with/without using chronological validation sets.}
\label{fig:abla1}
\end{minipage}
\hspace{7pt}
\begin{minipage}[t]{0.48\textwidth}
\centering
\includegraphics[width=3.25cm,height=2.75cm]{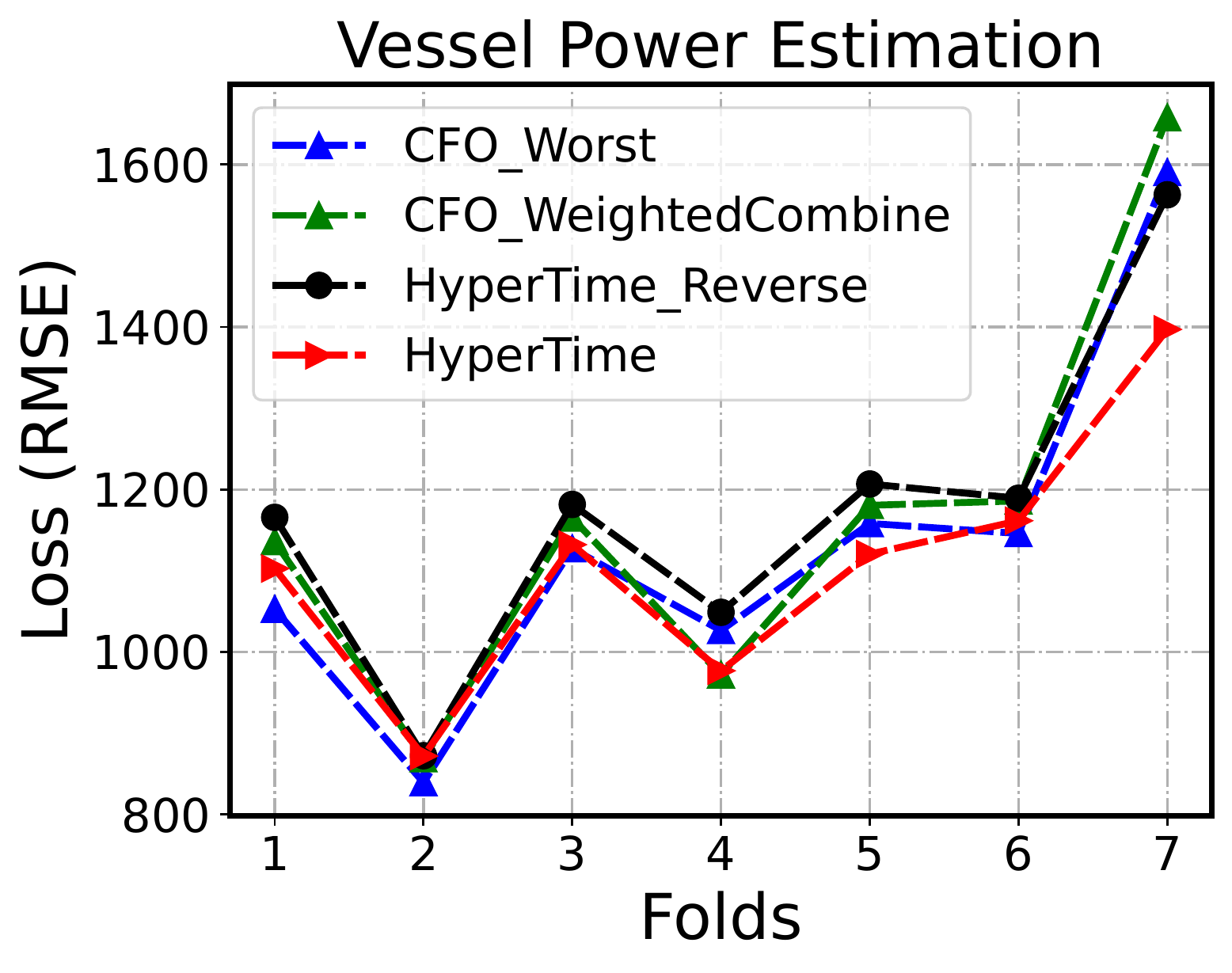}
\includegraphics[width=3.25cm,height=2.75cm]{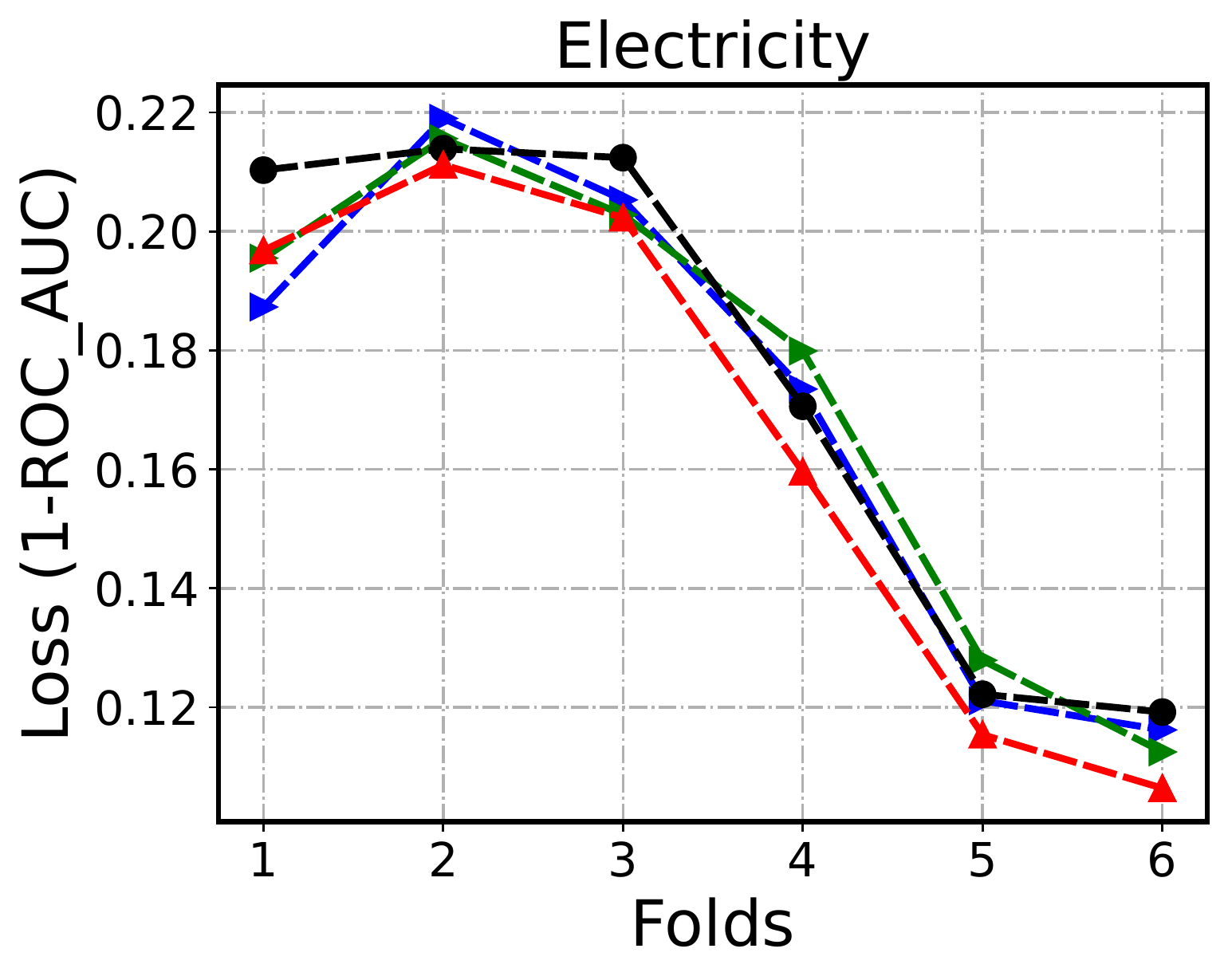}
\caption{Test loss of different folds using HyperTime, HyperTime\_Reverse, CFO\_WeightedCombine and CFO\_Worst.}
\label{fig:abla2}
\end{minipage}
\end{figure}

\paragraph{The construction of validation sets.}

We first perform experiments to investigate the validation sets construction part in HyperTime.  We construct the following two variants of CFO and HyperTime by changing the way the validation sets are constructed to study how these changes impact the final performance: 
\textbf{(1) HyperTime-w/o-chronology:} In this method, we do not use the chronologically constructed validation sets and instead construct validation sets by randomly re-sampling from shuffled datasets (no-chronological-order, conventional approach in practice).
\textbf{(2) CFO-w/-chronology:} In this method, we add the chronologically constructed validation sets in the standard CFO.

We compare the performance of (1) and (2) with their original versions, i.e., CFO and HyperTime on the Electricity dataset and Vessel power dataset. 
Figure~\ref{fig:abla1} shows the test results of these methods. 
We observe that the methods with chronological validation sets (CFO-w/-chronology and HyperTime) are obviously better than their corresponding versions with random validation sets (CFO and HyperTime-w/o-chronology).  
This indicates that chronological cross-validation is indeed an important contributing factor to the good performance of HyperTime.  

\paragraph{Optimization objectives.}

We then perform experiments to investigate the role lexicographic optimization plays in our method.
We vary optimization objective formulations in our method in different ways and investigate the factors in the objective formulations that make contributions to the final performance. 
We construct three new methods for comparison as shown below: 
 \textbf{(1) CFO\_Worst:} Using chronological validation sets and setting the worst-fold validation loss as the objective in CFO. \textbf{(2) HyperTime\_Reverse:} Reversing the priority of optimization objectives in our method, i.e., setting the worst-fold validation loss as the primary objective and the average validation across folds as the secondary objective. \textbf{(3) CFO\_WeightedCombine:} Using chronological validation sets and setting the optimization objective as a weighted combination of two objectives in CFO. Weights are 0.99 and 0.01 for average validation loss and the worst fold validation loss, respectively, which is consistent with the tolerance setting in our experiments (\(\kappa\) = 1\%).

As shown in Figure~\ref{fig:abla2}, the optimization objective formulation in our method is obviously better. There are three takeaways:
(1) HyperTime is obviously better than CFO\_Worst indicating that both two optimization objectives (average and worst fold performance) should be considered in our method.
(2) HyperTime consistently outperforms CFO\_WeightedCombine indicates that the importance of formulating the optimization of these two objectives as a lexicographic optimization problem.
(3) HyperTime consistently outperforms HyperTime\_Reverse indicating that the average validation loss shall be considered an objective of a higher priority compared with the worst-case validation loss.

\subsubsection{Compatibility with Robust Training}
\label{sec:compatibility}
\begin{table*}[htb!]
\vskip -0.1in
\centering
\caption{Test time results regarding \textbf{average test accuracy}, the \textbf{worst fold accuracy}, and the \textbf{number of winning folds} for a state-of-the-art robust training method LISA~\cite{yao2022improving}, our method HyperTime, and the methods adding LISA to CFO and HyperTime respectively. All the numbers are the higher the better.}
\label{tab:with_robust_training}
\scalebox{0.75}{
\begin{tabular}{ccccc}
\toprule[1.5pt]
                          & LISA &  CFO+LISA & HyperTime & HyperTime+LISA \\ \hline
Test-average     & 83.45                                     & 84.19             & 84.58              & \textbf{85.11}          \\
Test-worst       & 70.74                                     & 65.77             & \textbf{73.91}     & 71.90                  \\ 
Winning fold num & 0                                         & 0                 & 3                  & \textbf{6}              \\  \bottomrule[1.5pt]
\end{tabular}}
\end{table*}

Since our method is a generic hyperparameter optimization solution, it is agnostic to the specific learning method as long as there are important hyperparameters to tune. In this subsection, we show the compatibility of our method with robust training methods, which shows its advantage in further boosting the robustness of the whole machine-learning pipeline.
We perform evaluations on the yearbook dataset by adding robust optimization method LISA~\cite{yao2022improving}
to HPO which achieves the best performance in Wild-Time~\cite{yao2022wildtime}. 
Specifically, we reuse the LISA implementation from Wild-Time and use our algorithm to tune its hyperparameters, including both the architecture hyperparameters and non-architecture hyperparameters. The detailed search space is the same with Section~\ref{sec:image} as shown in Appendix~\ref{append:evaluation}. 
Table~\ref{tab:with_robust_training} shows the final overall results and we also include the test results of different folds for each method in Appendix~\ref{app:more_experiments}.
We have the observations below:  
(1) Combining HyperTime with LISA achieves better average performance compared with using either of them.
(2) Combining HyperTime with LISA has more winning numbers compared with all other methods.
(3) Combining HyperTime with LISA improves the worst fold performance over LISA, but degrades the worst fold performance compared with HyperTime alone. In summary, observations (1) and (2) demonstrate that the
combination of HyperTime and other non-HPO temporal
distribution shift solutions further boost the model
performance compared with using either of them. Observation (3) shows one disadvantage of this combination, and it is worth investigating
the reason and the method for mitigating it in future work.
\vskip -0.1in
\section{Conclusion}
\vskip -0.1in
In this work, we propose a new method to combat temporal distribution shifts named HyperTime. HyperTime approaches this problem by performing multi-objective hyperparameter tuning with a lexicographic preference across different objectives, on a set of chronologically constructed validation sets. We evaluate HyperTime across multiple datasets and learners, which verify its strong empirical performance even compared with the state-of-the-art robust training methods. Moreover, HyperTime is agnostic of learning methods, and combining it with other non-HPO robust learning methods could further boost the performance.

\newpage
\bibliographystyle{plainnat}
\bibliography{reference}
\clearpage
\appendix

\section{Details of LexiFlow}
\label{appendix_lexiflow}
LexiFlow is a randomized direct search based HPO algorithm, which is able to direct the search to the optimum based on lexicographic comparisons over pairs of configuration. 
It start from a initial hyperparameter configuration and gradually move to the optimal point by making comparisons with nearby configurations in the search space. More details about LexiFlow could be found in the paper~\cite{zhang2023targeted}.

\begin{algorithm}[H]\small
\SetNoFillComment
\SetAlCapNameFnt{\small}
\SetAlCapFnt{\small}
\caption{LexiFlow}
\label{alg:1}
\KwIn{
Objectives $\mathbf{L}(\cdot)$, tolerances $K$ (optional).
}
\textbf{Initialization:} Initial configuration $c_0$, $t' = r= s = 0$, $\delta = \delta_{init}$; \\
Obtain $\mathbf{L}(c_{0})$, and $c^* \gets c_0 $, $\cH \gets \{ c_0 \}$,  $Z_{\cH} \gets \mathbf{L}(c_0)$ \\
\While{$t = 0, 1, ...$}{
Sample $\bu$ uniformly from unit sphere $\bbS$

\lIf{\texttt{Update}($\mathbf{L}(c_t + \mu \bu)$,  $L{(c_{t})}$, $Z_{\cH}$)}{$c_{t+1} \gets c_{t} + \mu \bu$, $t' \gets t$}
\lElseIf{\texttt{Update}($\mathbf{L}(c_{t}- \mu \bu)$,  $L{(c_{t})}$, $Z_{\cH}$)}{
    $c_{t+1} \gets c_t - \mu \bu$, $t' \gets t$
}
\lElse{
    $c_{t+1} \gets c_t$, 
    $s \gets s +1$
}
$\mathcal{H} \gets \mathcal{H} \cup \{c_{t+1}\}$, and update $Z_{\cH}$ according to \eqref{eq:lexi-target-input}
\lIf{$s = {2^{d-1}}$}
{$s \gets 0, 
\delta \gets \delta \sqrt{(t'+1)/(t+1)}$
} 
\If{$\delta < \delta_{lower}$}{ \tcp{Random Restart} 
$r \gets r+1$, $c_{t+1} \gets N(c_0, I)$, $\delta \gets \delta_{init} + r$ 
}
}
~~\textbf{Procedure} \texttt{Update}($\mathbf{L}(c')$,  $\mathbf{L}{(c)}$, $Z_{\cH}$): \\
~~~~~~~~~~\If{$\mathbf{L}(c') \prec_{(Z_{\cH})} \mathbf{L}({c})$ \textbf{Or} \big($\mathbf{L}(c') =_{(Z_{\cH}) } \mathbf{L}({c})$ and $\mathbf{L}(c') \prec_l \mathbf{L}({c}\big)$)}{
~~~~~~~\If{$\mathbf{L}(c') \prec_{(Z_{\cH})} \mathbf{L}({c^*})$ \textbf{Or} \big($\mathbf{L}(c') =_{(Z_{\cH}) } \mathbf{L}({c^*})$ and $\mathbf{L}(c') \prec_l \mathbf{L}({c^*})$\big)}{~~~~~~~~$c^* \gets c'$ ~~}
~~~~~~~~~~\textbf{Return True} ~~~~~~~~~~~~~~~~~~~~}
~~\Else{\textbf{Return False} ~~~~~~~~~~~~~~~} 
\textbf{Output:} A lexi-optimal configuration $c^*$
\end{algorithm}
\footnotetext{We adjust LexiFlow and make such changes: 1. Remove the optional input targets 2. Adjust tolerance from an absolute value to a relative value in percentage.}

Given any two hyperparameter $c'$ and $c$, the targeted lexicographic relations $=_{(Z)}$, $\prec_{(Z)}$ and  $\preceq_{(Z)}$ in Algorithm~\ref{alg:1} are defined as:
\begin{align}  
 & \mathbf{L}(c')  =_{(Z)} \mathbf{L}(c)  \Leftrightarrow  L^{(i)}(c') = L^{(i)}(c)  \lor   \\ 
 &(L^{(i)}(c') \leq z^{(i)}\land L^{(i)}(c) \leq  z^{(i)}) ~~\forall i \in [1,...,I],    \nonumber \\
 &  \mathbf{L}(c')  \prec_{(Z)}  \mathbf{L}(c)   \Leftrightarrow  
  \exists i \in [I]: L^{(i)}(c') < L^{(i)}(c)  \land  \\
  & L^{(i)}(c) > z^{(i)} \land L_{i-1}(c) =_{(Z)} L_{i-1}(c'), \nonumber 
\\ 
 &  \mathbf{L}(c') \preceq_{(Z)} \mathbf{L}(c)   \Leftrightarrow  \mathbf{L}(c') \prec_{(Z)} \mathbf{L}(c) \lor  \mathbf{L}(c') =_{(Z)} \mathbf{L}(c),
\end{align}
Where $L_{i-1}(c)$ denotes the a vector with the first $i-1$ dimensions of $\mathbf{L}(c)$, i.e., $L_{i-1}(c) = [L^{(1)}(c),...,L^{(i-1)}(c)]$.
$\forall i \in [1,...,I]$, $z^{(i)}$ are computed based on historically evaluated points $\mathcal{H}$. 
$C_{\mathcal{H}}^{0} = \mathcal{H}$, $\forall i \in [1,...,I]$:
\begin{equation}
\begin{aligned} 
\label{eq:lexi-target-input}
z^{(i)} = L_{\mathcal{H}}^{(i)}*(1+\kappa^{(i)}), C_{\mathcal{H}}^{i} \coloneqq 
\{c \in C_{\mathcal{H}}^{i-1}|L^{(i)}(c)\leq z^{(i)}\}, L_{\mathcal{H}}^{(i)}  \coloneqq \min_{c \in C_{\mathcal{H}}^{i-1}} L^{(i)}(c). \\
\end{aligned}
\end{equation}
\section{Theoretical Analysis}
\label{appendix:Theoretical}
\begin{itemize}
 \item We denote the $k$-th validation set as: $\cD_1$,  $\cD_2$, ..., $\cD_K$.
\item We use $d$ to denote the a data instance pair $(\bx,y)$ in a particular validation set $\cD$ in general.
\item We use $l_{c}(d)$ to denote the loss of a particular ML model configured $c$ on data instance $d$.
\item We use \(\mathbb{P}\) to denote the test data distribution.
\end{itemize}

\begin{lem} \label{lemma:difference_between_empirical_and_expected}
   When $\cD_{\test}$ and $\cD_{\val}$ are from the same distribution, then for any $c \in \cC$, with probability at least \(1-\epsilon\) (\(\epsilon \in (0,1)\)), we have:
   \[
   |\Loss(f_{c}, \cD_{\val}) - \bE[\Loss(f_{c}, \cD_{\test})] | \leq \sqrt{\frac{\beta \ln(1/\epsilon)}{2|\cD_{val}|}},
   \]
   in which \(\beta\) is the distance between the largest and the lowest loss value on any data instance. %For example, on binary classification task, \(\beta = 1\).
\end{lem}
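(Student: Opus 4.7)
The plan is to recognize this statement as a direct application of Hoeffding's inequality to an empirical mean of bounded i.i.d.\ random variables. I would fix a configuration $c \in \cC$ and condition on the training set $\cD_{\train}$, so that the trained model $f_c = f_{c,\cD_{\train}}$ is deterministic. The empirical validation loss then takes the form $\Loss(f_c, \cD_{\val}) = \frac{1}{|\cD_{\val}|} \sum_{d \in \cD_{\val}} l_c(d)$, i.e., an average over $|\cD_{\val}|$ draws from the test distribution $\bP$.

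First I would make two identifications. (i) Since $\cD_{\val}$ and $\cD_{\test}$ are sampled i.i.d.\ from the same distribution $\bP$, the summands $\{l_c(d)\}_{d \in \cD_{\val}}$ are i.i.d.\ with common mean $\bE_{d \sim \bP}[l_c(d)]$. (ii) Because the empirical test mean is unbiased, this common mean equals $\bE[\Loss(f_c, \cD_{\test})]$. By the hypothesis on $\beta$, each summand lies in an interval of width $\beta$.

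Next I would apply the two-sided Hoeffding bound
\[
\bP\bigl(|\Loss(f_c, \cD_{\val}) - \bE[\Loss(f_c, \cD_{\test})]| \geq t\bigr) \leq 2\exp\!\bigl(-2|\cD_{\val}|\, t^2/\beta^2\bigr),
\]
set the right-hand side equal to $\epsilon$, and solve for $t$; rearrangement matches the form of the stated bound (modulo absorbing the standard Hoeffding constants into the displayed expression).

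The main obstacle is not technical but conceptual bookkeeping: one must be careful about where the randomness lives. Conditioning on $\cD_{\train}$ (which is disjoint from $\cD_{\val}$) is what makes the losses $l_c(d)$ i.i.d.\ across the validation sample, and the ``same distribution'' hypothesis is what makes their common expectation equal to the expected test loss. Once these are in place, Hoeffding's inequality does all the remaining work; no further structural properties of the hypothesis class or of the loss beyond boundedness are needed.
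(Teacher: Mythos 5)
Your proposal is correct and follows essentially the same route as the paper: condition on $\cD_{\train}$ so the validation losses $l_c(d)$ are i.i.d.\ and bounded, apply the two-sided Hoeffding inequality, and invert the tail bound to get the stated deviation. If anything your bookkeeping is cleaner than the paper's (you use a separate symbol $t$ for the deviation rather than reusing $\epsilon$, and your exponent $-2|\cD_{\val}|t^2/\beta^2$ is the standard Hoeffding form for a range-$\beta$ variable, whereas the paper's displayed exponent has $\beta$ in place of $\beta^2$, a constant-level discrepancy that also propagates to the lemma's stated bound).
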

\begin{proof}[Proof of Lemma~\ref{lemma:difference_between_empirical_and_expected}]
We denote by \(\mathbb{P}\) the data distribution on which \(\cD_{\test}\) and \(\cD_{\val}\) is drawn from. Without loss of generality, we assume the loss function is Mean squared Error, i.e., for any validation set $\cD$, $Loss(f_{c}, \cD) = \frac{1}{|\cD|}\sum_{d \in \cD} l_{c}(d) =  \frac{1}{|\cD|}\sum_{(\bx,y) \in \cD} (f_{c}(\bx)  - y)^2$.  
We further assume a bounded loss: $\forall d \sim \mathbb{P}$, $l_{c}(d) < \beta $.  
We have:
\[
|\Loss(f_{c}, \cD_{\val}) - \bE[\Loss(f_{c}, \cD_{\test})] | = |\frac{1}{|\cD_{\val}|}\sum_{i=1}^{|\cD_{\val}|}l_{c}(d_{i}) - \bE_{d \sim \mathbb{P}}[l_{c}(d)]|,
\]
where $d_{i}$ is the \(i\)-th data instance in \( \cD_{\val}\) and thus $d_{i} \sim \mathbb{P}$.

According to Hoeffding’s inequality~\cite{hoeffding1994probability}, we have:
\begin{align}
\label{bound_sup_0}
Pr(|\Loss(f_{c}, \cD_{\val}) - \bE[\Loss(f_{c}, \cD_{\test})] |> \epsilon ) &= |\frac{1}{|\cD_{\val}|}\sum_{i=1}^{|\cD_{\val}|}l_{c}(d_{i}) - \bE_{d \sim \mathbb{P}}[l_{c}(d)]| \\ \nonumber
& \leq 2\exp{\frac{-2|\cD_{\val}| \epsilon^2}{\frac{1}{|\cD_{\val}|}\sum_{i=1}^{|\cD_{\val}|} \beta}} = 2\exp{\frac{-2|\cD_{\val}| \epsilon^2}{\beta}}. 
\end{align}

By letting \(2\exp{\frac{-2|\cD_{\val}| \epsilon^2}{\beta}} = \epsilon\), we have with probability at least \(1-\epsilon\).  
\[
|\Loss(f_{c}, \cD_{\val}) - \bE[\Loss(f_{c}, \cD_{\test})] | \leq  \sqrt{\frac{\beta \ln (2/\epsilon) }{2|\cD_{\val}|}}.
\]
Which completes the proof.
\end{proof}

\begin{proof}[Proof of Theorem \ref{theorem_bound}]
\label{proof-theorem1}
\textbf{Proof sketch.}
% \begin{proof}[Proof of Theorem \ref{theorem_bound}]
We consider the following two cases: (I) \(L_{k^*}(\hat c) \leq L_{avg}(\hat c)\); (II) \(L_{k^*}(\hat c) > L_{avg}(\hat c)\).

It is easy to prove that under case (I), we have \(L_{k^*}(\hat c)  \leq  L_{\avg}(\hat c) \leq (1+\kappa) L_{\avg}^* \leq (1+\kappa) L_{\avg}(c_{k^*}^*)\), in which the last inequality is based on the definition of \(L_{\avg}^*\).

Under case (II), when \( \kappa \geq \frac{L_{\avg}(c^*_{k^*})}{L^*_{\avg}} - 1\), we have $c^*_{k^*} \in \cC_*^{(0)}$, i.e., \(c^*_{k^*}\) is within the \(\kappa^{(0)}\)-tolerance from the best average validation loss, and thus
\[
  L_{k^*}(\hat c) \leq  L_{\worst}(\hat c) \leq  L_{\worst}(c_{k^*}^*),
\]
in which the last inequality is based on the fact that \(c^*_{k^*}\) is within the \(\kappa^{(0)}\)-tolerance from the best average validation loss. We choose the best configuration according to the performance on $L_{\worst}$, and thus $L_{\worst}(\hat c) < L_{\worst}(\tilde c) $ for all $\tilde c \in \cC_{^*}^{(0)}$.

Combining the conclusions under both cases and high probability concentration of \(L_{k^*}(\hat c)\) to \(\mathbb{E}[\Loss(f_{\hat c}, \cD_{\test})]\) finishes the proof. 
We provide a more rigorous proof below.

\textbf{Case 1: \(L_{k^*}(\hat c) \leq L_{avg}(\hat c)\)}.

In this case, we have,
\begin{align}
    L_{k^*}(\hat c) & \leq  L_{\avg}(\hat c) \leq (1+\kappa) L_{\avg}^* \leq (1+\kappa) L_{\avg}(c_{k^*}^*). \\ \nonumber
\end{align} 

\textbf{Case 2:} \(L_{k^*}(\hat c) > L_{avg}(\hat c)\):

When \( \kappa \geq \frac{L_{\avg}(c^*_{k^*})}{L^*_{\avg}} - 1\), we have $c^*_{k^*} \in \cC_*^1$, and thus
\begin{align}
     L_{k^*}(\hat c) & \leq  L_{\worst}(\hat c) \leq  L_{\worst}(c_{k^*}^*),
\end{align}
in which the last inequality is based on the fact that when $c^*_{k^*} \in \cC_{*}^1$, we choose the best configuration according to the performance on $L_{\worst}$, and thus $L_{\worst}(\hat c) < L_{\worst}(\tilde c) $ for all $\tilde c \in \cC_{^*}^1$.

Combining \textbf{Case 1} and \textbf{Case 2}, we have,

\begin{align} \label{eq:val_loss_bound}
   & L_{k^*}(\hat c) \leq  
\begin{cases}
  (1+\kappa)  L_{avg}(c_{k^*}^*),~~\text{if~~}   L_{k^*}(\hat c) \leq L_{avg}(\hat c)    \\
    L_{\worst}(c_{k^*}^*),~~\text{Otherwise}  
\end{cases}
\end{align}

According to the conclusion from Lemma~\ref{lemma:difference_between_empirical_and_expected}, we have,
\begin{align} \label{eq:concentration}
    \mathbb{E}[\Loss_{\cD_{\test}}(\hat c)] \leq L_{k^*}(\hat c) + \sqrt{\frac{\beta \ln(1/\delta)}{2|\cD_{val}|}}.
\end{align}

Combining Eq.~\eqref{eq:val_loss_bound} and Eq.~\eqref{eq:concentration} finishes the proof.
\end{proof}

\section{Additional Empirical Results}
\label{app:more_experiments}

\textbf{Per fold performance in Table~\ref{tab:with_robust_training}.}~In Figure~\ref{fig:combine}, we present the per-fold test performance for different methods in Table~\ref{tab:with_robust_training}. Our observations indicate that the combination of HyperTime and Lisa achieves the best performance compared to other methods. It demonstrates that the combination of HyperTime and other
non-HPO solutions overall further boost the model performance.

\begin{figure*}[htb!]
\begin{center}
\includegraphics[width=0.4\columnwidth]{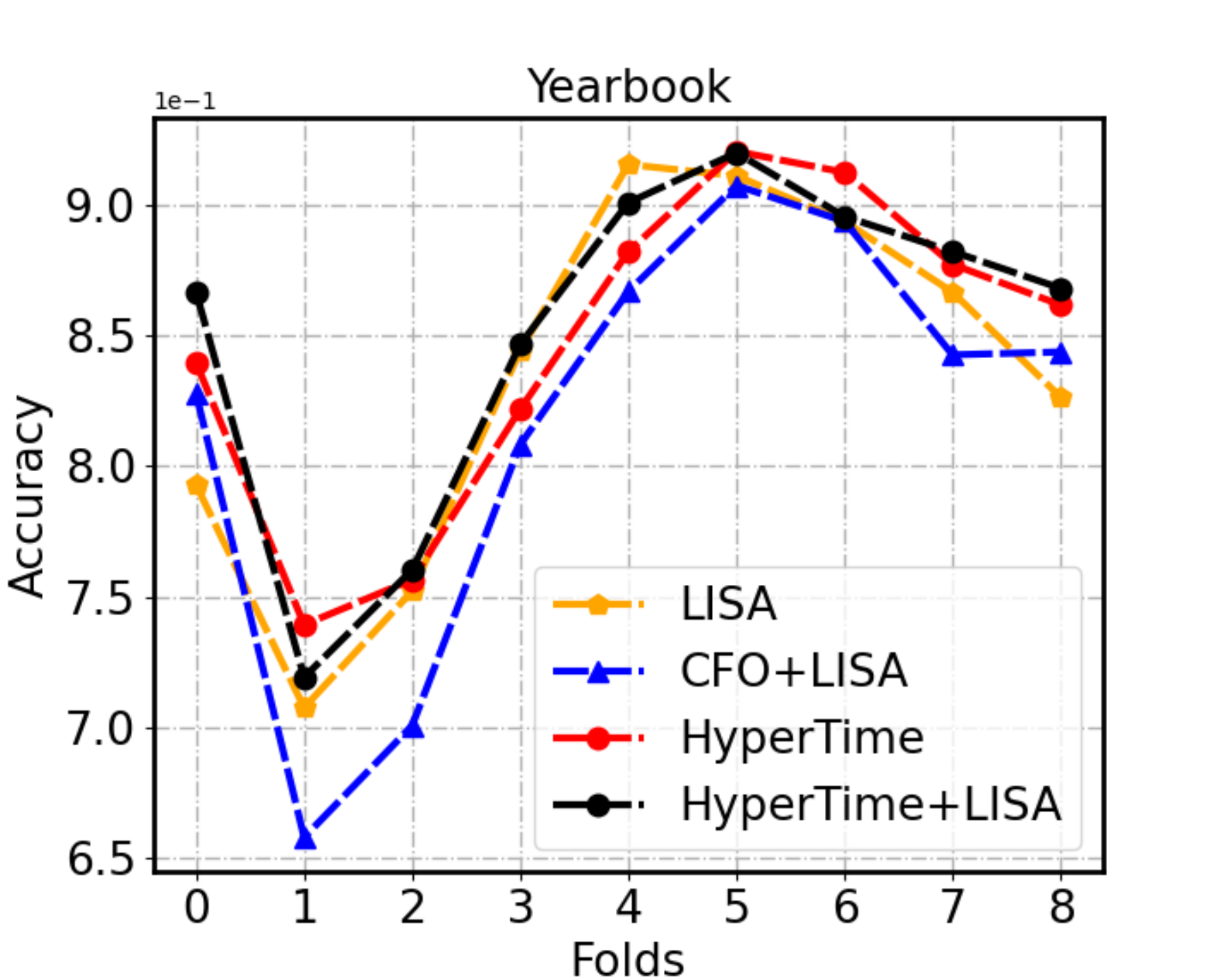}
\caption{Per fold test accuracy for a state-of-the-art robust training method LISA~\cite{yao2022improving}, our method HyperTime, and the methods combining LISA and CFO and HyperTime respectively. The results are from the same set of experiments with that in Table~\ref{tab:with_robust_training}. All the numbers are the
higher the better.}
\label{fig:combine}
\end{center}
\end{figure*}

\textbf{HyperTime is consistently better than ERM on Wild-Time~\cite{yao2022wildtime}.}~One of the main conclusions of Wild-Time~\cite{yao2022wildtime} benchmark is that there is no existing method that could consistently outperform ERM in all datasets of Wild-Time. Considering this, we compare HyperTime with ERM on all datasets of Wild-Time and we observe that HyperTime is consistently better than ERM. This demonstrates that HyperTime is a promising method compared to existing methods.
\begin{table}[h]
\caption{The comparisons between HyperTime and ERM on all datasets of Wild-Time benchmark. We show the average test performance and worst fold performance. All the numbers are the higher the better.}
\resizebox{\linewidth}{!}{
\begin{tabular}{l|ll|ll|ll|ll|ll|ll}
\toprule[1.5pt]
          & \multicolumn{2}{c|}{\textbf{MIMIC-Readmission}} & \multicolumn{2}{c|}{\textbf{MIMIC-Mortality}} & \multicolumn{2}{c|}{\textbf{HuffPost}}   & \multicolumn{2}{c|}{\textbf{Arxiv}}      & \multicolumn{2}{c|}{\textbf{FMoW-Time}}  & \multicolumn{2}{c}{\textbf{Yearbook}}    \\ 
          & Avg.       & Worst        & Avg.      & Worst       & Avg.   & Worst     & Avg.   & Worst     & Avg.   & Worst     & Avg.  & Worst     \\ \hline
ERM       & 48.02              & 43.68             & 77.24             & 73.45            & 70.60          & 69.14          & 46.39          & 44.53          & 58.05          & 46.40          & 77.74          & 65.24          \\ \hline
HyperTime & \textbf{54.81}     & \textbf{51.44}    & \textbf{78.26}    & \textbf{74.52}   & \textbf{71.68} & \textbf{69.72} & \textbf{48.48} & \textbf{46.52} & \textbf{59.17} & \textbf{50.02} & \textbf{84.58} & \textbf{73.91} \\ \bottomrule[1.5pt]
\end{tabular}}
\end{table}
\newpage

\textbf{Addition results of investigating validation sets construction.}~In addition to the test performance of each fold for different validation sets construction methods reported in Figure~\ref{fig:abla1}, we also report the average performance and the worst fold performance of different methods in Table~\ref{table:shuffled}. 

We can observe that considering both average performance and the worst fold performance, CFO and HyperTime with chronological validation sets are better compared with their corresponding versions with random validation
sets on both Electricity and Vessel power estimation. 
Moreover, HyperTime with chronological cross-validation sets achieves a better performance compared with other methods on all datasets in this experiment.
It further shows that chronological cross-validation has a dominating advantage over typical cross-validation in reaching a better loss in our method.
\begin{table*}[htb!]
\centering
\begin{center}
\caption{Test results of CFO and HyperTime using chronological and randomly shuffled folds construction methods. We show the average test accuracy and the worst fold accuracy, which are denoted as Test-average and Test-worst.}
\label{table:shuffled}
\end{center}
\small
\resizebox{0.99\textwidth}{!}{
\begin{tabular}{l|cccc|cccc}
\toprule[1.5pt]
 & \multicolumn{4}{c|}{{\color[HTML]{000000} \textbf{Electricity}}} & \multicolumn{4}{c}{\textbf{Vessel Power}} \\ \hline
Loss Type & \multicolumn{4}{c|}{1-ROC\_AUC} & \multicolumn{4}{c}{RMSE} \\ \hline
Method & CFO & \multicolumn{1}{c|}{HyperTime} & CFO & HyperTime & CFO & \multicolumn{1}{c|}{HyperTime} & {\color[HTML]{000000} CFO} & {\color[HTML]{000000} HyperTime} \\ \hline
With Chronology & True & True & False & False & True & True & False & False \\ \hline
Test-average & 0.1689 & \textbf{0.1653} & 0.1781 & 0.1825 & 1168.9258 & \textbf{1108.9676} & {\color[HTML]{000000} 1475.6870} & {\color[HTML]{000000} 1442.6700} \\
Test-worst & \textbf{0.2106} & 0.2112 & 0.2274 & 0.2286 & 1668.9744 & \textbf{1397.1399} & {\color[HTML]{000000} 2403.6068} & {\color[HTML]{000000} 2241.1207} \\ \bottomrule[1.5pt]
\end{tabular}}
\end{table*}

\textbf{Supplementary results of comparing HyperTime with CFO\_WeightedCombine}

In this section, we conduct additional experiments to compare HyperTime with CFO\_WeightedCombine, which set the optimization objectives as a weighted combination of average validation loss and the worst fold validation loss in CFO. To represent the weights assigned to the worst fold validation loss, we use the symbol $\lambda$. Consequently, we set the weight for the average validation loss as $1-\lambda$. We use four different $\lambda$ settings for CFO\_WeightedCombine: 5\%, 10\%, 15\%, and 20\%. Figure~\ref{fig:more_weightedcombine} shows the per-fold test loss of these two methods.

We observe that HyperTime outperforms CFO\_WeightedCombine under all four weight settings. This further demonstrates the importance of formulating the optimization of these two objectives as a lexicographic optimization problem.

\begin{figure*}[htb!]
\vskip -0.1in
\begin{center}
\includegraphics[width=0.24\textwidth]{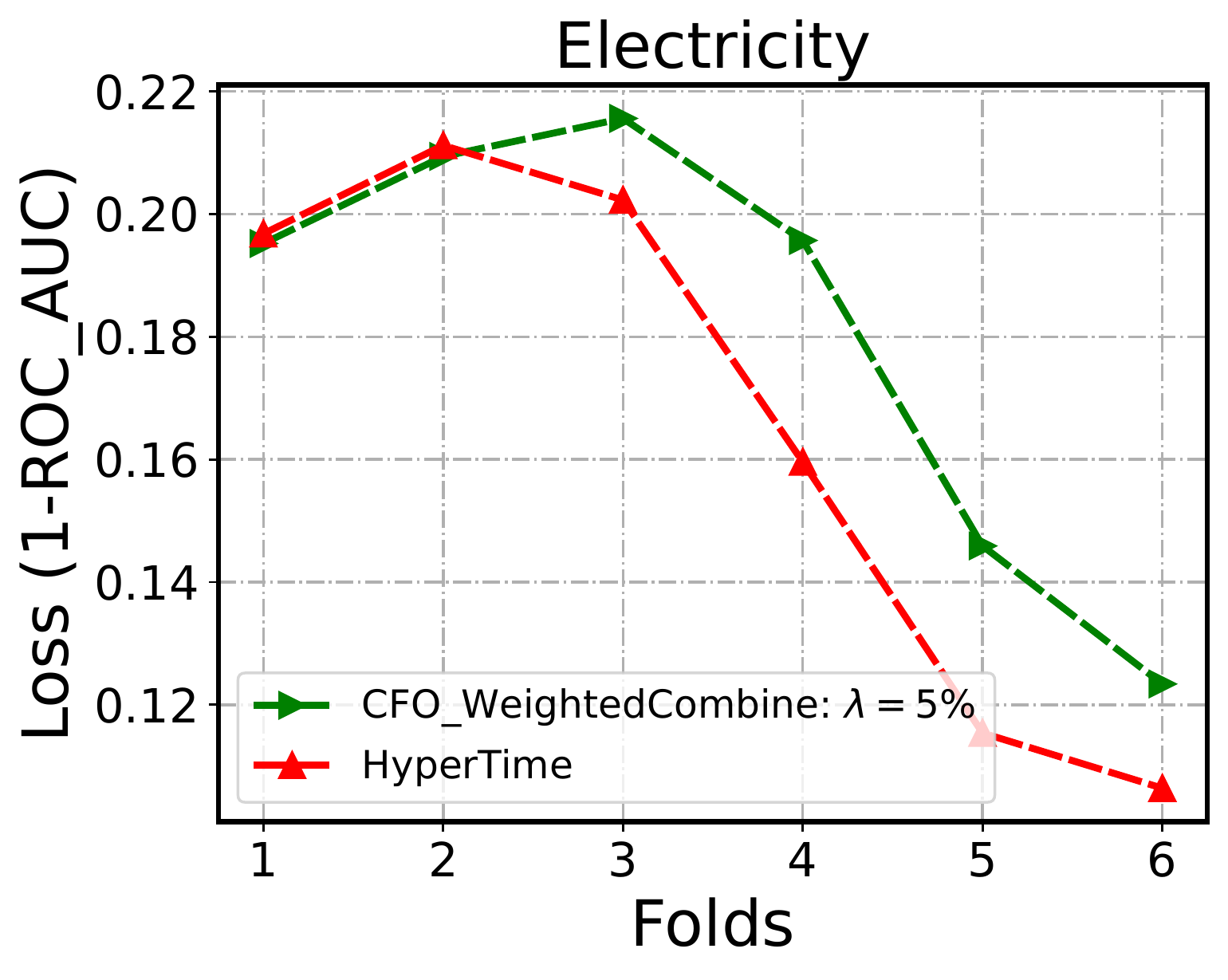}
\includegraphics[width=0.24\textwidth]{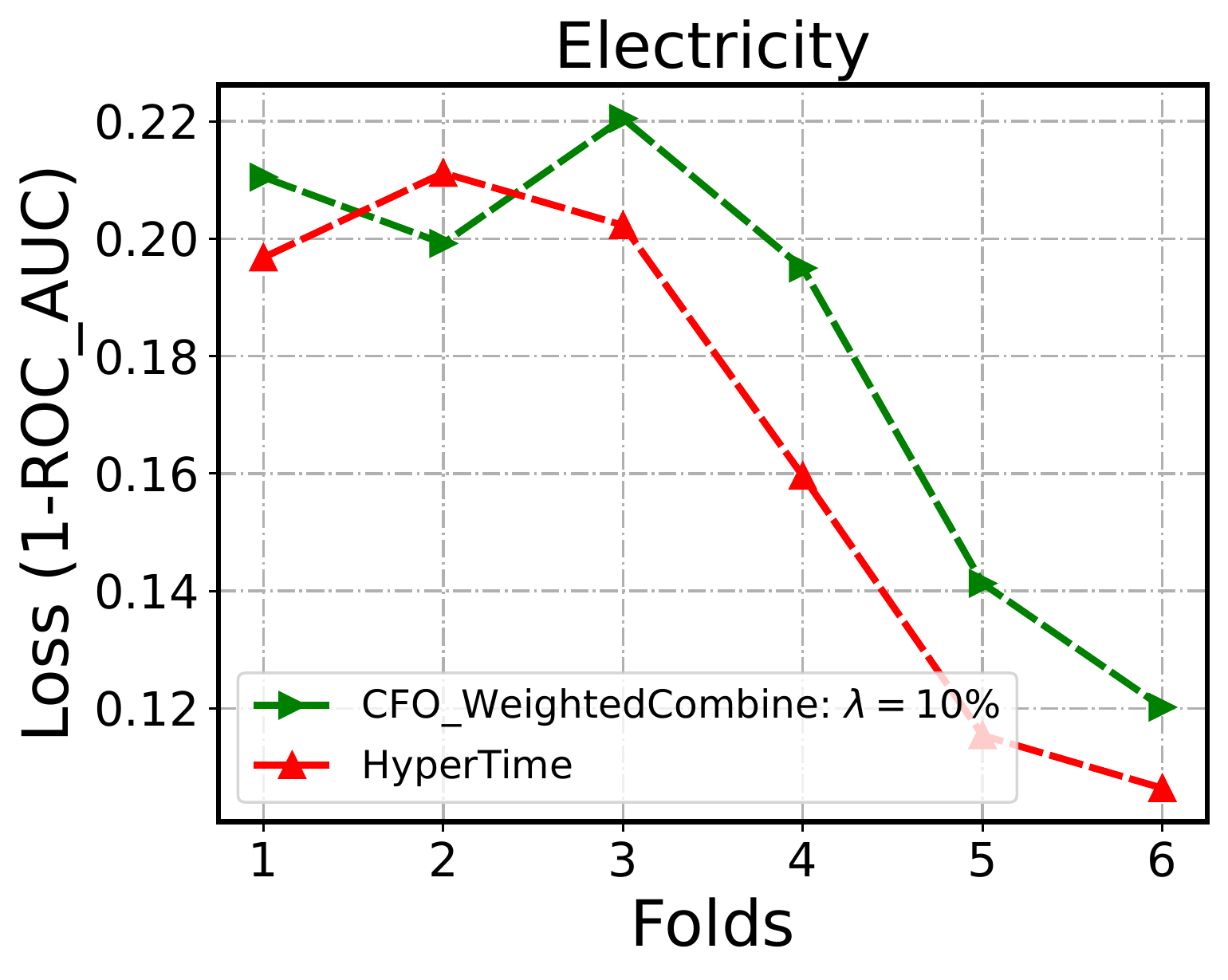}
\includegraphics[width=0.24\textwidth]{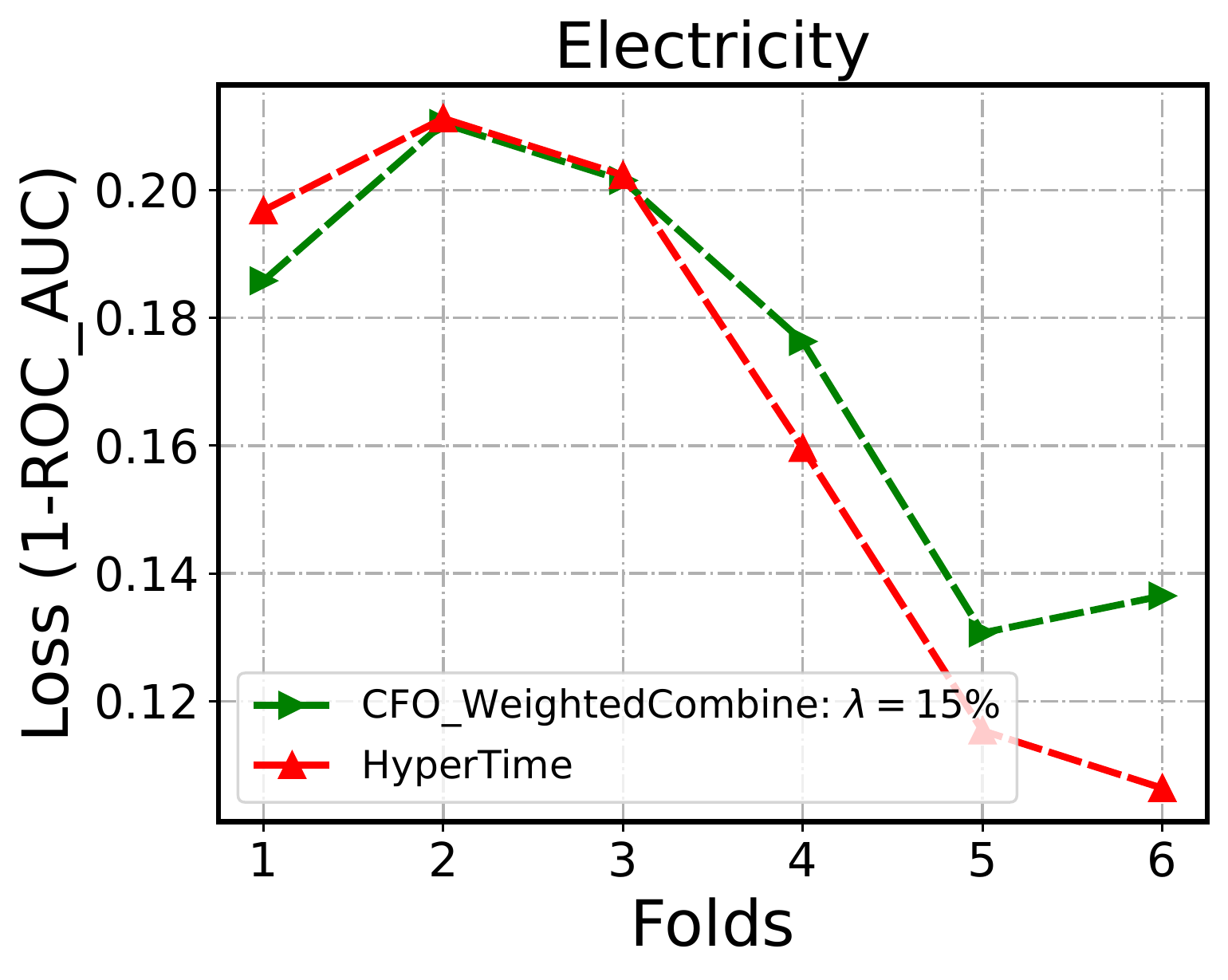}
\includegraphics[width=0.24\textwidth]{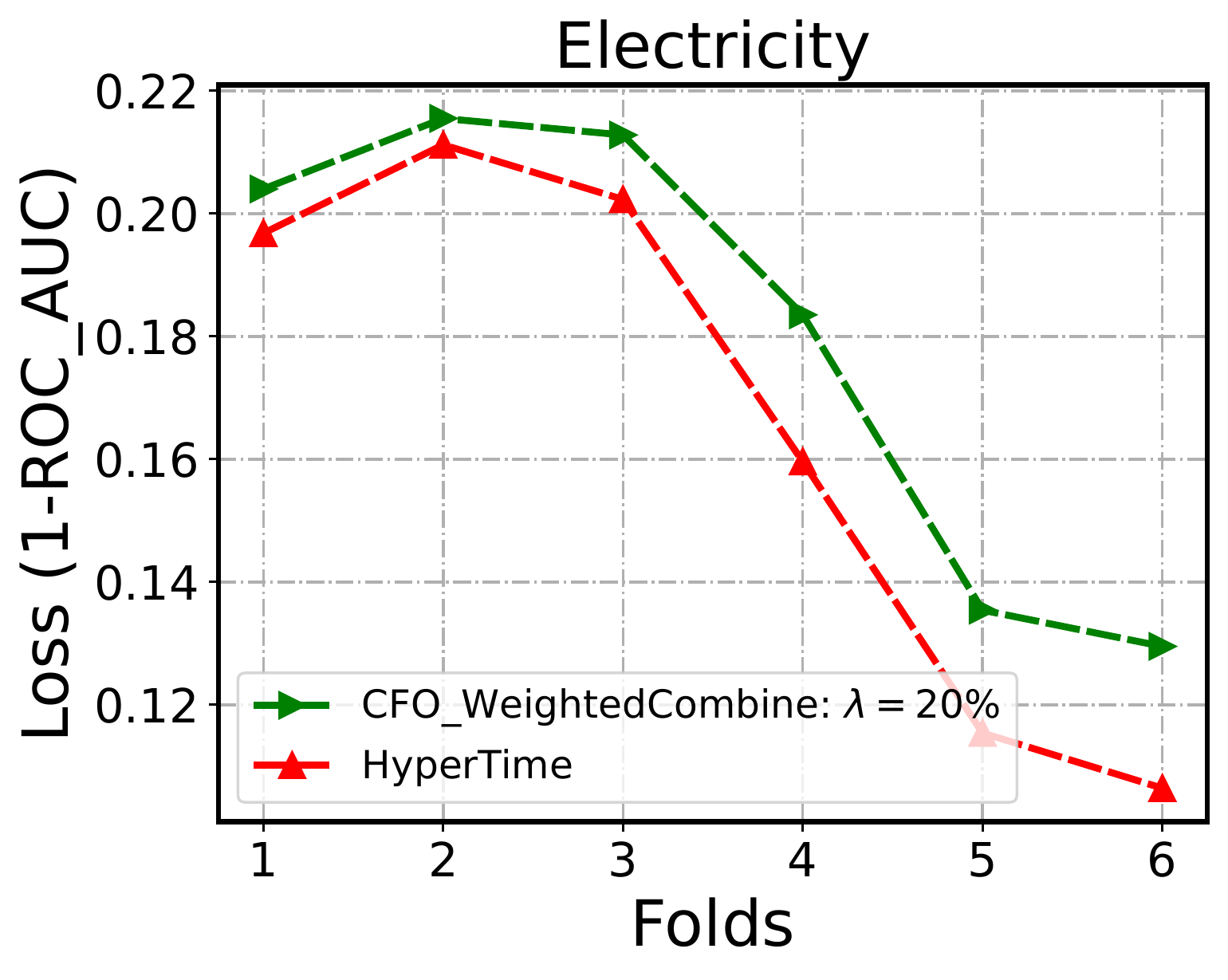}
\includegraphics[width=0.24\textwidth]{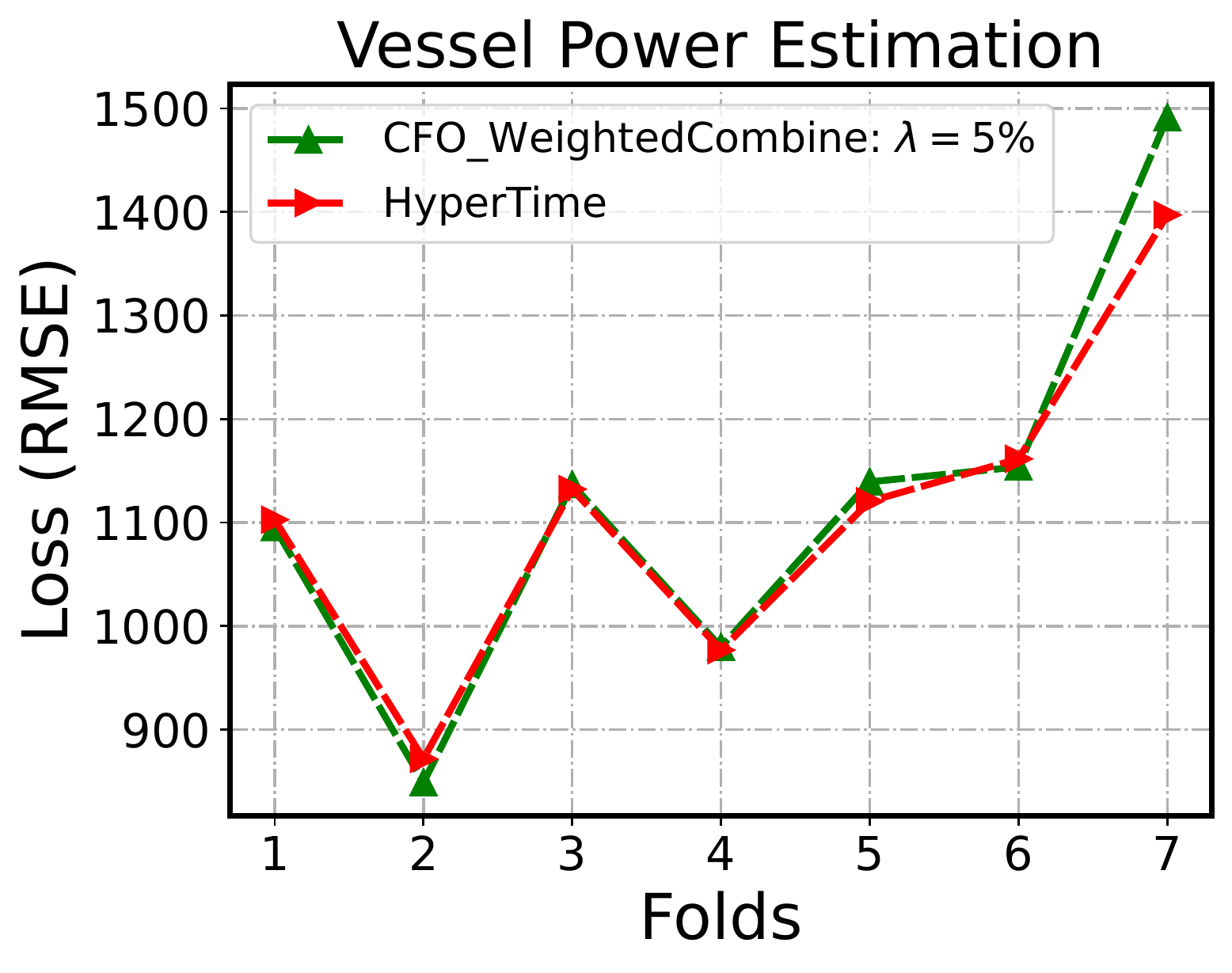}
\includegraphics[width=0.24\textwidth]{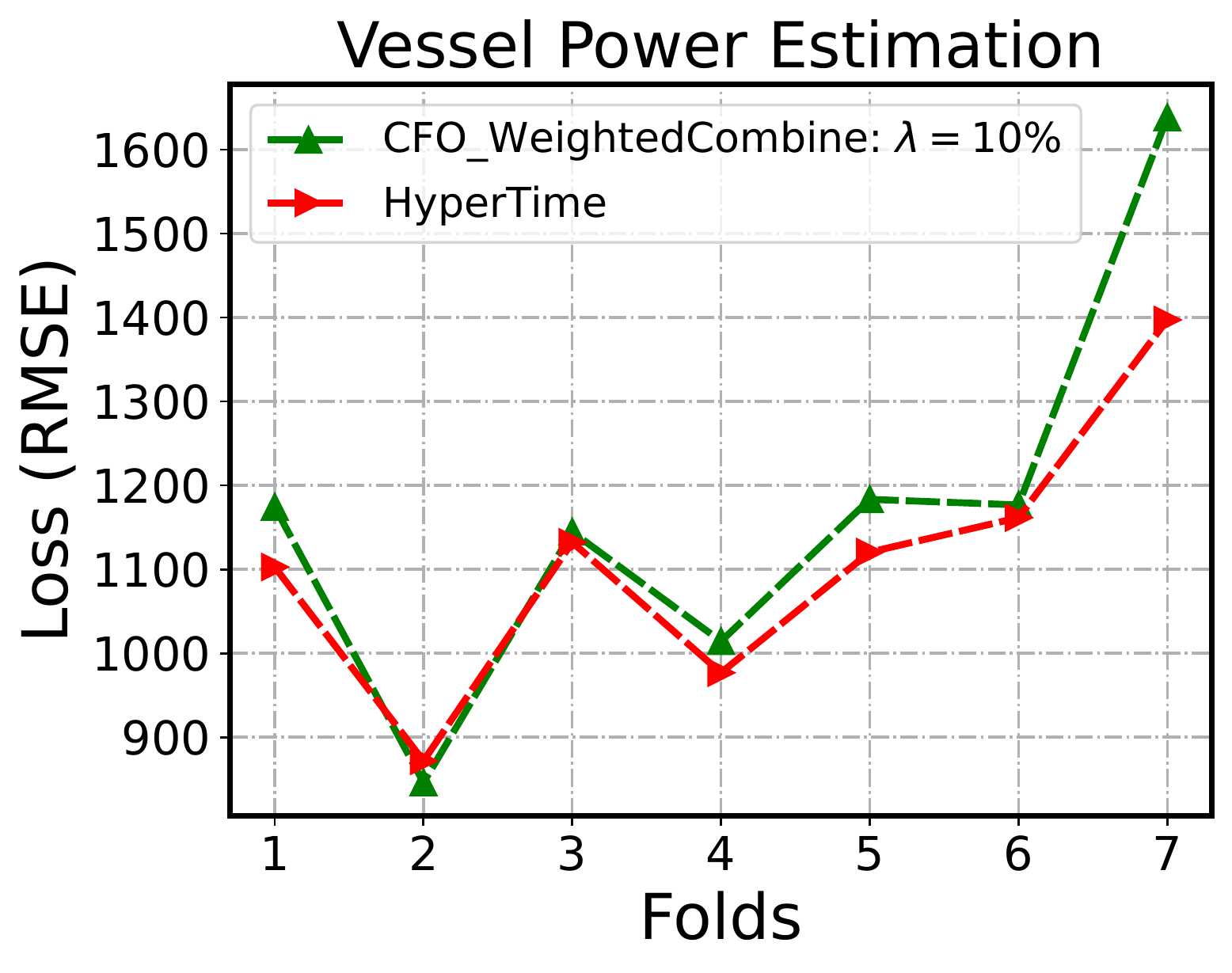}
\includegraphics[width=0.24\textwidth]{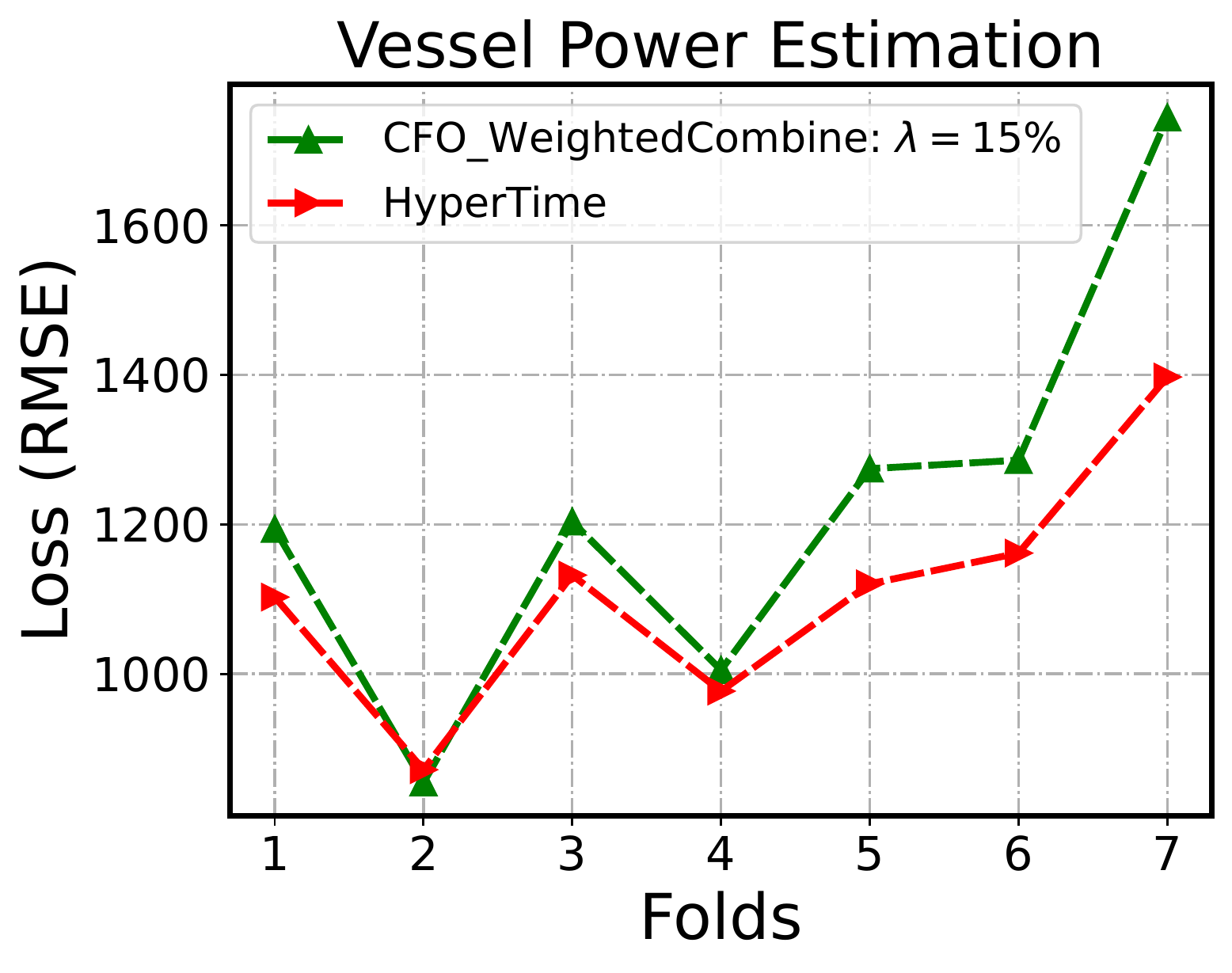}
\includegraphics[width=0.24\textwidth]{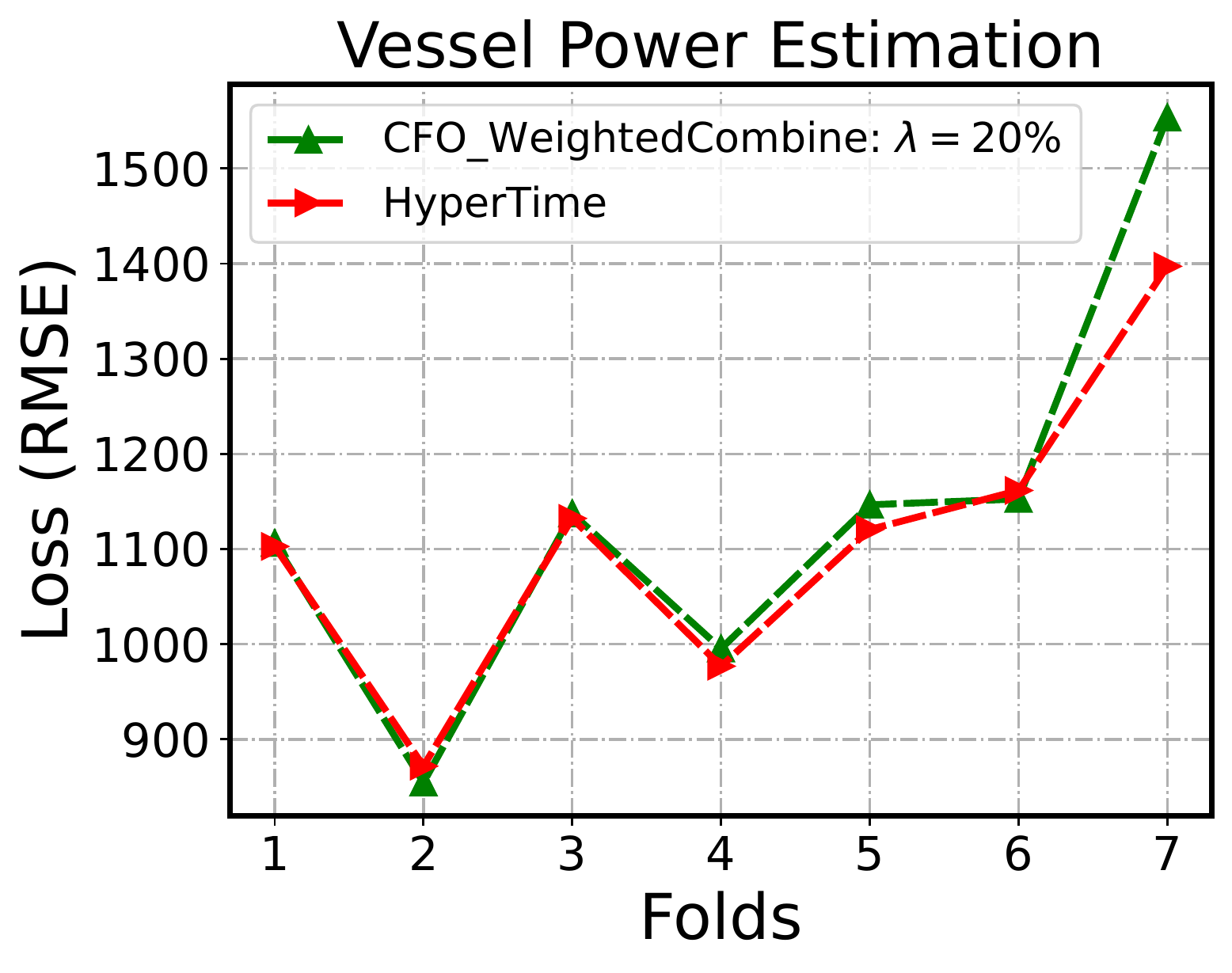}
\caption{Per fold test loss (lower the better) for Hypertime and CFO\_WeightedCombine on Electricity and Vessel Power Estimation datasets with different weight settings. The results are averaged over five random seeds.} \label{fig:more_weightedcombine}
\end{center}
\vskip -0.15in
\end{figure*}

\section{Search Space}
\label{append:evaluation}
\subsection{Search Space of gradient-boosting tree}
\begin{table}[h]
\vskip -0.15in
\caption{Hyperparameters tuned in XGboost.} 
\label{Appendix:XGboost_hyperparameters}
\centering
% \vspace{2mm}
\scalebox{1}{
\begin{tabular} {r| l |l} 
% \hline
hyperparameter  & type & range \\ %[0.5ex]
 \hline
 estimators number & int & [4,  min(32768, train$\_$datasize)] \\
 max leaves & int & [4,  min(32768, train$\_$datasize)]  \\ 
 max depth & int & [0, 6, 12]   \\
 min child weight & float & [0.001, 128] \\
 learning rate & float & [1/1024, 1.0] \\
 subsample & float & [0.1, 1.0]  \\
 colsample by tree & float & [0.01, 1.0] \\
 colsample by level & float & [0.01, 1.0]  \\
 reg alpha & float & [1/1024, 1024]   \\
 reg lambda & float & [1/1024, 1024]  \\
\end{tabular}}
\end{table}

\begin{table}[h]
\caption{Hyperparameters tuned in LGBM.} 
\label{Appendix:LGBM_hyperparameters}
\centering
% \vspace{2mm}
\scalebox{1}{
\begin{tabular} {r| l |l} 
% \hline
hyperparameter  & type & range \\ %[0.5ex]
 \hline
 estimators number & int & [4,  min(32768, train$\_$datasize)] \\
 leaves number & int & [4,  min(32768, train$\_$datasize)]  \\ 
 min child sample & int & [2, 129] \\
 learning rate & float & [1/1024, 1.0] \\
 log\_max\_bin & int & [3, 11]  \\
 colsample by tree & float & [0.01, 1.0] \\
 reg alpha & float & [1/1024, 1024]   \\
 reg lambda & float & [1/1024, 1024]  \\
\end{tabular}}

\end{table}

\subsection{Search Space of neural network}

We use the same neural network backbone as the Wild-Time~\cite{yao2022wildtime} benchmark for different datasets based on its source code~\footnote{https://github.com/huaxiuyao/Wild-Time}. We list the detailed search space used in different datasets in Table~\ref{table:nn_hyperparameters}.

\begin{table}[H]
    \caption{Hyperparameters tuned in neural networks.} 
    \begin{center}
    { \scalebox{0.83}{
    \begin{tabular}{llll}
        \toprule
        \textbf{Dataset} & \textbf{Hyparameter} & \textbf{type} & \textbf{Range}\\
        \midrule
        \multirow{5}{*}{Yearbook}       & Training iteration & int & $[3000, 5000]$\\
                                      & learning rate    & float   & $[$1e-4$, $1e-1$]$\\
                                      & batch size & int & $\{32,64,128,256\}$\\
                                      & n\_conv\_channels & int & $[16,512]$ \\
                                      & kernel\_size & int & $\{2,3,4,5\}$ \\
                                      & has\_max\_pool & bool & True or Flase \\
        \midrule
        \multirow{5}{*}{FMoW-Time}   & Training iteration & int & $[3000,6000]$\\
                                      & learning rate    & float   & $[$1.5e-5$,$3e-4$]$\\
                                      & batch size & int & $\{32,64,128,256\}$\\
                                      & weight\_decay & float & $[0,0.03]$ \\
        \midrule
        \multirow{2}{*}{MIMIC-IV}     & Training iteration & int & $[3000,5000]$\\
                                      & learning rate    & float   & $[$5e-4$,$5e-2$]$\\
                                      & n\_head & int & $\{2,3,4,5\}$\\
                                      & n\_layer & int & $\{2,3,4,5\}$ \\
                                      & hidden\_size & int & $\{64,128,256,512\}$ \\
        \midrule
        \multirow{2}{*}{Huffpost}    & Training iteration & int & $[6000,8000]$\\
                                      & learning rate    & float   & $[$1e-5$,$1e-4$]$\\
                                      & weight\_decay & float & $[0.01,0.03]$ \\
        \midrule
        \multirow{2}{*}{arXiv}        & Training iteration & int & $[6000,8000]$\\
                                      & learning rate    & float   & $[$1e-5$,$1e-4$]$\\
                                      & weight\_decay & float & $[0.01,0.03]$ \\
        \bottomrule
    \end{tabular}
    }}
    \end{center}
    \label{table:nn_hyperparameters}
\end{table}

\section{Dataset Details}
\label{appendix:dataset}
\subsection{Datasets used in tuning tree-based boosting methods}

\textbf{Overall Informations}

\begin{itemize}[leftmargin=*]
\vspace{-4mm}
\setlength\itemsep{-0.2em}
\item Electricity: A classification task. It is widely used for evaluating distribution shifts mitigation methods~\cite{mallick2022matchmaker}. 
The dataset contains two and a half years of data. We exclude the first half year and use the next one year for training and the last year for testing. We split every 2 months into one fold. 
\item Vessel power estimation: A regression task taken from Wild-Time benchmark~\cite{malinin2023shifts}. It is a large dataset with 523,190 training samples over 4 years, and we use the out-of-distribution dev-set as our test data which has 18,108 samples. We split the training data uniformly into 12 folds, and the test data into 7 folds.
\item Urban temperature prediction: A regression task to predict the urban daily maximum of average 2-m temperature. It has distribution shifts as mentioned in~\cite{KAYETAL15,OLESONETAL18}. We split every 5 years into one fold and we use the first 40 years for training and test on the remaining 35 years. 
\vspace{-4mm}
\end{itemize}

We first show the overall information of each dataset in our experiment including the feature number, instance number, and the number of validation and test folds.

\begin{table*}[htb]
\centering
\caption{The feature number, instance number, and validation/test folds number of each dataset in the paper. }\label{table:datasets}
\scalebox{0.75}{
\begin{tabular}{c|c|c|c|c}
\toprule[1.5pt]
 & \textbf{Feature num} & \multicolumn{1}{l|}{\textbf{Instance num}} & \textbf{Val fold num} & \textbf{Test Fold num} \\ \hline
\textbf{Electricity} & 8 & 33873 & 6 & 6 \\
\textbf{Vessel power estimation}  &  11 & 541298 & 12 & 7 \\
\multicolumn{1}{l|}{\textbf{Temperature prediction}} & 10 & 437884 & 8 & 7 \\
\textbf{YearBook} & \textbackslash{} &  33431 & 8 & 9  \\ \bottomrule[1.5pt]
\end{tabular}}
\end{table*}

\textbf{More details about Temperature prediction dataset}

Temperature prediction is a synthetic data for urban climate research, which includes 75 years of urban climate condition information in specific areas. It has distribution shifts as mentioned in existing urban climate research works~\cite{KAYETAL15,OLESONETAL18}.
Here we select 16 gridcells of data according to ~\cite{ZHENGETAL21}, with the latidude of \(35.34, 36.28, 37.23, 38.17\) and longitude of \(115.0, 116.2, 117.5, 118.8\).
It includes ten features including near-surface humidity, eastward near-surface wind, precipitation, etc.  
In our experiment, we predict the urban daily maximum of average 2-m temperature which could be regarded as a regression task. 
More information about this data is available at ~\cite{ZHENGETAL21}.

\subsection{The datasets in the Wild-Time benchmark}

\textbf{Overall Informations}

\begin{itemize}[leftmargin=*]
% \vspace{-4mm}
% \setlength\itemsep{-0.01em}
\item \textbf{Yearbook}: Yearbook is an image dataset with 37,921 frontal-facing American high school yearbook photos from 1930 - 2013. Each data point is a 32 $\times$ 32 $\times$ 1 grey-scale image and the label is the student’s gender. 
Distribution shifts occur due to social norms, fashion styles, and population demographics changing over time. Following the same setting with Wild-Time, we use 1970 as the split timestep to split the training and test set. 
\item \textbf{FMoW-Time}: FMow-Time (Functional Map of the world) is a satellite imagery dataset that consists of 141,696 examples from 2002 - 2017. Each input of one data instance is a 224 $\times$ 224 RGB satellite image, and the corresponding label is one of 62 land use categories. Due to human activity, satellite imagery changes over time which is a kind of temporal distribution shift. Following the same setting with Wild-Time, we use the data from 2002 - 2012 as the training set and use the data from 2013 -2017 as the test set. 
\item \textbf{MIMIC-IV}: MIMIC-IV is one of the largest public healthcare datasets that consists of a vast number of medical records of over 40,000 patients. In this dataset, a temporal distribution shift happens over time considering the emergence of new treatments and changes in patient demographics.   
In our experiments, we treat each admission as one
record, resulting in 216,487 healthcare records from 2008 - 2019. Following the same setting as Wild-Time, the training set is from 2008-2013 and the test set is from 2014 - 2020.
We consider two classification tasks:
(1) MIMIC-Readmission aims to predict the risk of being readmitted to the hospital within 15 days. (2) MIMIC-Mortality aims to predict in-hospital mortality for each patient.
\item \textbf{Huffpost}: The task of the HuffPost dataset is to identify tags of news articles from their headlines. Temporal distribution shifts occur over time due to changes in the style or content of current events. 
For each data instance, the input feature is a news headline and the output is the news categories. In our experiment, we only include the categories that appear in all years from 2012 - 2018. Following the same setting with Wild-Time, we use 2012 - 2015 as the training set while 2016 - 2018 as the test set. 
\item \textbf{arXiv}: The task of arXiv dataset is to predict the primary category of arViv pre-prints given the paper title as input. Temporal distribution shifts occur due to the evolution of research fields. This dataset includes 172 pre-print categories from 2007 - 2022. Following the same setting with Wild-Time, we use 2007 - 2016 as the training set while 2017 - 2022 as the test set.
% \vspace{-4mm}
\end{itemize}

\textbf{More information about fold splitting}

To ensure fair and consistent comparisons, we use the same validation/test folds splitting setting as the Wild-Time benchmark. We list the number of training and test folds in Table~\ref{table: fold_information} below. More information can be found in \cite{yao2022wildtime}.

\begin{table*}[htb]
\centering
\caption{The number of validation folds and test folds for each dataset of the Wild-Time benchmark in our experiments.}\label{table: fold_information}
\scalebox{0.99}{
\begin{tabular}{c|c|c|c|c|c}
\toprule[1.5pt]
 & \textbf{Yearbook} & \multicolumn{1}{l|}{\textbf{FMoW-Time}} & \textbf{MIMIC-IV} & \textbf{Huffpost} & \textbf{arXiv} \\ \hline
\textbf{Val fold num} & 8 & 11  & 2  & 4  & 10 \\
\textbf{Test fold num}  &  9 & 5 &2  &3 &6 \\
\bottomrule[1.5pt]
\end{tabular}}
\end{table*}

\end{document}